\setlist[description]{style=nextline} 
\DeclarePairedDelimiterX{\infdivx}[2]{(}{)}{%
  #1\;\delimsize\|\;#2%
}
\newcommand{\range}{\mathsf{range}}
\newcommand{\evalit}{\mathsf{eval}}
\newcommand{\hide}[1]{}
\begin{document}
     \title{Discussion Graph Semantics of
      First-Order\\ Logic with Equality for Reasoning 
      about \\ Discussion and Argumentation} 

\author{{\ }\quad\qquad\qquad\qquad Ryuta Arisaka\orcidID{0000-0003-3203-1517}}   
\institute{Department of Informatics, Kyoto University, Japan \\
\email{ryutaarisaka@gmail.com} 
} 

    \maketitle
    
\begin{abstract}          
	We make three contributions. First, we formulate a {\it discussion-graph semantics for 
	first-order logic with equality}, enabling reasoning about discussion and argumentation 
	in AI more generally than before. 
	This addresses the current lack of a formal reasoning framework capable of handling diverse discussion and argumentation models. 
	Second, we {\it generalise Dung’s notion of extensions to cases where two or more graph nodes in an argumentation framework are equivalent}.
	Third, we connect these two contributions by showing that the {\it generalised extensions are first-order characterisable} within the proposed discussion-graph semantics.  
        {\it Propositional characterisability of all Dung's extensions} is an immediate consequence. 
	We furthermore show that the {\it set of all generalised extensions (acceptability semantics), too, are first-order characterisable}. 
	{\it Propositional characterisability of all Dung's acceptability semantics} is an immediate consequence. 
  \end{abstract} 

\section{Introduction}  \label{sec_introduction}   

Discussion and argumentation are fundamental to multi-agent communication, and numerous 
models have been proposed to represent them. Some—such as Toulmin’s model \cite{Toulmin58}, issue-based information systems \cite{Kunz70}, and argumentation schemes \cite{Walton08}—serve as guidelines for structuring effective discussions. Others, following Dung’s tradition \cite{Dung95,Dung09,Besnard01}, provide logical reasoning capabilities grounded in formal principles. Most of these models are graph-based (or reducible to graphs).
Each has found application in diverse fields, including AI and Law \cite{Bench-Capon03} 
and 
Education \cite{Hsu15}. Yet, with respect to formal reasoning, the Dung-style models have received the most attention.

Since the emergence of ChatGPT in 2022, the ability to automatically extract graphical representations of everyday discussions and argumentations has grown rapidly. This motivates a 
broader framework for reasoning about such structures. Even seemingly simpler reasoning tasks—such as detecting whether a discussion exhibits the structure of Toulmin’s model, 
an issue-based system, or an argumentation scheme—can be practically valuable. 
When a pattern is absent, a human or large language model could prompt participants to provide missing elements.

Despite this demand, there is a conspicuous lack of a formal reasoning framework  
for general discussion graphs. Prior proposals \cite{Villata12,Doutre14} remain specific to 
Dung’s argumentation model, are constructed bottom-up, and are largely propositional. 
How they might extend to broader discussion structures is not obvious. 
Seeing the gap, after preliminaries, we make three contributions. 

 \textbf{First (Section 3).}
We {\it develop a reasoning framework for general discussion and argumentation} in a top-down manner. To ensure compatibility with existing formal logic, we do not introduce a new logic but adopt the syntax of first-order logic with equality. What we newly formulate is its \emph{discussion-graph semantics}, enabling direct reasoning over discussion and argumentation graph structures. 
Technically, we define the semantics of first-order logic formulas 
where the domain of discourse---the object-level semantic structure---is a discussion graph with annotations on nodes and edges. The key research question is as follows: {\it in first-order 
logic over sentences, a sentence (of arbitrary finite length) is the basic building 
block; analogously, in first-order logic over discussion graphs, a discussion graph (of arbitrary finite size) should serve as the basic building block. How can this intuition be captured in the semantics of first-order logic (with equality)?} We answer this by allowing predicate symbols to denote annotated graph structures.

 \textbf{Second (Section 4).} 
We show that such annotated graph structures help extend 
Dung's argumentation model naturally. 
We present an {\it equivalence-equipped Dung's model} 
and then define a corresponding set of {\it novel  ` extensions'}. 

\textbf{Third (Section 5).}
We connect these two contributions by showing that {\it all the generalised extensions are 
first-order characterisable} within the proposed discussion-graph semantics.  
The closest prior contribution to ours is in \cite{Doutre14} in which propositional 
characterisability of some types of Dung’s ‘extensions’ were shown. 
That of other Dung’s ‘extensions’ were left open, however. En route, we close the 
gap as a technical corollary of our contribution. Furthermore, we prove that 
the set of all extensions of a given type, too, can be similarly characterised. 
This fully answers the question as to how expressive a logic must be to characterise Dung's argumentation---propositional logic is sufficient. 

\vspace{-0.3cm}

\section{Preliminaries: the Syntax of First-Order Logic}  \label{sec_technical_preliminaries} 
We introduce the syntax of first-order logic 
with equality succinctly. 

A {\it logical connective} is a member of the set $\{\top, \bot, \forall, 
\exists, \neg, \wedge, \vee, \supset\}$. $\top$ and $\bot$ 
have arity 0; $\forall$, $\exists$ and $\neg$ have arity 1; and   
$\wedge$, $\vee$ and $\supset$ have arity 2. 
	A {\it variable} is a member of an uncountable set $\pmb{Vars}$.  
	A {\it logical symbol} is one  of the following: a logical connective, a variable, 
	a parenthesis/bracket symbol ({\it e.g.} `(', `[',  `)', `]'), 
	a punctuation symbol ({\it e.g.} `.', `,') 
	or an equality symbol =. 
	
	A {\it function symbol} is a member of an uncountable 
	set $\pmb{Funcs}$  
	and each function symbol has an arity of some non-negative 
	integer. It is assumed that there are infinitely many 
	function symbols for each arity in $\pmb{Funcs}$. 
	A {\it constant} is a function symbol with arity 0. 
	A {\it predicate symbol} is a member of an uncountable set $\pmb{Preds}$ 
	and each predicate symbol has an arity of some non-negative 
	integer. It is again assumed that there are infinitely many 
	predicate symbols for each arity in $\pmb{Preds}$. 
	A {\it propositional variable} is a predicate symbol 
	with arity 0. A {\it non-logical symbol} 
	is one of the following: a function symbol (including a constant) or a  
	predicate symbol (including a propositional variable). 
	  
The language for first-order logic, \textsf{FOL}, comprises  
all the logical and non-logical symbols. 
For convenience, any variable is denoted by $x, y, z$, 
 any predicate symbol by $p$ and any function symbol by $f$. 
 By $t$, we shall denote a {\it term} and 
 by $\overrightarrow{t}$ we shall denote a tuple of 
 $n \geq 0$ terms $(t_1, \ldots, t_n)$. The precise 
 definition of a term is as follows. (1) Any variable is a term. 
 (2) Any $f(\overrightarrow{t})$ for some function symbol $f$  
 of arity $n \geq 0$ and some tuple of $n$ terms $\overrightarrow{t}$ 
 is a term. (3) Any term is recognised by (1) and (2) alone. 

A {\it formula} $F$ is any of the following. 
(1) $p(\overrightarrow{t})$ for some predicate symbol $p$ of arity $n \geq 0$ and  
	some tuple of $n$ terms $\overrightarrow{t}$.  
	(2) $\top$ or $\bot$. (3) 
		$\neg F_1$ for some formula $F_1$.  
		(4) $\forall x.F_1$ or $\exists x.F_1$  
			for some variable $x$ and some formula $F_1$.  
		(5) $F_1 \wedge F_2$, $F_1 \vee F_2$ or 
		      $F_1 \supset F_2$ for some formulas $F_1$ and $F_2$.  
	      (6) $(F_1)$ or $[F_1]$ for some formula $F_1$.  
	      (7) $t_1 = t_2$ for some terms $t_1$ and $t_2$. 
A formula $F$ is {\it well-formed} iff there is no variable 
occurring free 
in $F$.  As for the binding
order of the logical connectives, $\neg$ binds strongest, 
$\wedge$ and $\vee$ bind the second strongest, 
$\forall$ and $\exists$ the third strongest, while $\supset$ binds the weakest. 

	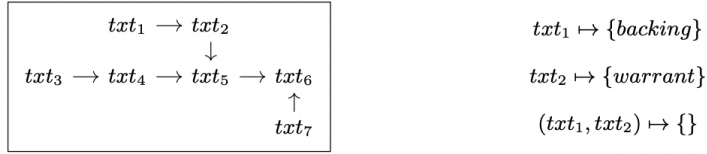
\begin{figure}[!t] 
	\begin{center} 
		\adjustbox{scale=0.9}{ 
	      \begin{tikzcd}[column sep=small,row sep=small,/tikz/execute at end picture={
			      \node (large) [rectangle, draw, fit=(A1) (A2) (A3),
			      label=above:Toulmin's model \cite{Toulmin58} 
			      formatted into an annotated graph] {}; 
  }] 
		      |[alias=A1]| txt_1:\{backing\} \arrow[r]{}{\{\}} & txt_2:\{warrant\} 
		      \arrow[d]{}{\{\}} \arrow[dr]{}{\{\}}\\ 
		      |[alias=A3]|	txt_3:\{grounds\} \arrow[r]{}{\{\}} & 
		      txt_4:\{qualifier\} \arrow[r]{}{\{\}} 
		      & txt_5:\{claim\}\\ 
		       & & |[alias=A2]| txt_6:\{rebuttal\} \arrow[u,swap]{}{\{\}}\\\\
	\end{tikzcd} 
		}  
	    
		\adjustbox{scale=0.9}{
		\begin{tikzcd}[column sep=small,row sep=small,/tikz/execute at end picture={
			      \node (large) [rectangle, draw, fit=(A1) (A2) (A3),
			      label=above:The graph 
			      part of the above annotated 
			      graph] {}; 
  }] 
		       |[alias=A1]| txt_1 \arrow[r]{}{} & txt_2
			\arrow[d]{}{} \arrow[dr]{}{}\\ 
		      |[alias=A3]|	txt_3 \arrow[r]{}{} & 
		      txt_4 \arrow[r]{}{}  
		      & txt_5\\ 
		       & & |[alias=A2]| txt_6 \arrow[u,swap]{}{}
	\end{tikzcd} 
	
	\begin{tikzcd}[row sep=tiny,/tikz/execute at end picture={\node 
		(large) [rectangle, fit=(A1) (A2) (A3),
		label=above:{\small Assignment of annotations partially shown}]{};}] 
		txt_1 \mapsto \{backing\}\\ 
		txt_2 \mapsto \{warrant\} \\ 
		(txt_1, txt_2) \mapsto \{\} 
	\end{tikzcd} 
		}
	\end{center}   
		\caption{\pmb{Top}: An example
		of a Toulmin's model as an `annotated 
		graph' comprising a graph and annotations on nodes and edges. 
		\pmb{Bottom left}: The graph part of the annotated graph. 
		\pmb{Bottom right}: Assignment 
		of annotations, partially shown 
		for two nodes $txt_1$ and $txt_2$ and 
		one edge $(txt_1, txt_2)$.  
		$backing$ is assigned to $txt_1$, 
		$warrant$ is assigned to $txt_2$, 
		and no annotation is assigned to $(txt_1, txt_2)$. 
		}  
		\label{fig_toulmin} 
	\end{figure} 

\section{Discussion Graph Semantics of First-Order Logic} \label{sec_discussion_graph_semantics_first_order_logic_equality} 
When discussion models are 
extracted from natural language texts 
through mining \cite{Habernal17,Lawrence20}, 
they typically appear as graphs with annotations on nodes and 
edges. It is therefore natural to take such structures 
as the domain of discourse
in our semantics.  We call a pair consisting of 
(i) a graph and (ii) a function 
assigning annotations (as strings) to its nodes and edges 
an {\it annotated graph}.    
The upper part of 
Fig. \ref{fig_toulmin} shows one example of annotated graph representing Toulmins' model. 
This is a compact form of the underlying graph, shown at the lower left, 
and a function to assign annotations, partially shown at the lower right.  
In the remainder, annotated graphs are presented in this compact form. 

To explain the annotated graph in Fig. \ref{fig_toulmin} a little more in detail,  
$txt_i$ denotes some natural language text (as a string) 
and the empty annotation $\{\}$ (as an empty string) 
signifies a `{\it whatever}' or `{\it don't care}'
annotation. 
Now, Toulmin's model is not originally a graph: an edge from $txt_2$ 
	should actually go to the edge from $txt_4$ to $txt_5$.    
	We have converted the network into the annotated graph  
	with one edge from $txt_2$ to $txt_4$  
	and another edge from $txt_2$ to $txt_5$. 
      
	\begin{figure}[!t]
	\begin{center}  
		\adjustbox{scale=0.9}{
	      \begin{tikzcd}[row sep=small,/tikz/execute at end picture={
			      \node (large) [rectangle, draw, fit=(A0) (A1) (A2) (A3) (A10),
			      ] {}; 
  }]    
		      & & |[alias=A0]|  \\
		       |[alias=A1]| txt_1:\{\} \arrow[r,bend right=15,swap]{}{\{attacks\}}
		      & txt_2:\{\} \arrow[l, shift left, bend right=15,swap]{}{\{attacks\}} 
		      \arrow[r]{}{\{attacks\}} 
		      & txt_3:\{\} \arrow[r]{}{\{attacks\}} & txt_4:\{\}
		      \arrow[r]{}{\{attacks\}} & 
		      |[alias=A2]| 
		      txt_5:\{\} \arrow[ll, bend right=20,swap]{}{\{attacks\}} \\
		      & & |[alias=A10]| 
	\end{tikzcd} 
} 
	\end{center}   
		\caption{An example of Dung's model as an annotated graph.} 
		\label{fig_dung_kunz} 
	\end{figure} 

      For other discussion models, Fig. \ref{fig_dung_kunz} shows one 
      example of Dung's model. 
      The original Dung's model does not care about 
      annotations on texts/statements, so every text/statement is given $\{\}$ 
      as its annotation. On the other hand, every edge 
      is given $attacks$ annotation. 
	
\begin{figure}[!h] 
	\begin{center} 
		\adjustbox{scale=0.9}{
\begin{tikzcd}[/tikz/execute at end picture={
			      \node (large) [rectangle, draw, fit=(A0) (A1) (A2),
			      ] {}; 
		      }]  
		      |[alias=A0]| {\rm \small \text{`That's too much of a complaint'}}:\{complaint\}
		       \arrow[r]{}{\{attacks\}}
		      & {\rm \small \text{`The sea is too far'}}:\{issue\} 
		      \arrow[d]{}{\{attacks\}} \\ 
		       {\small \text{`Let's go swimming'}}:\{suggestion\}
		       & |[alias=A1]|  {\small \text{`We go fishing'}}:\{opinion\} \arrow[l]{}{\{attacks\}}\\
		       |[alias=A2]| {\small \text{`Is it warm, though?'}:\{concern\}} 
		       \arrow[u]{}{\{questions\}}
	\end{tikzcd} 
}
	\end{center}    
	\caption{Another example of annotated graph.} 
	\label{fig_graph_domain_of_discourse} 
\end{figure} 

In practice, we may not obtain an annotated graph that perfectly 
matches one of these well-known discussion models 
structure-wise. 
Fig. \ref{fig_graph_domain_of_discourse} 
shows one example which is close to Dung's model  
but with $questions$ annotation on one of the edges 
and with various annotations on texts/statements. 

Consequently, if we are to reason generally about annotated graphs, 
each of them should be recognisable. We thus put forward the following definition.  
Here and elsewhere, 
$\mathfrak{p}(\ldots)$ denotes the power set of $\ldots$

\begin{definition}[Object-level annotated graphs] \rm    
	An {\it object-level statement} is a string.  
	An {\it annotation} is a string. 
	An {\it object-level annotated graph} is a pair of: a graph   
	$(ObjStmts, ObjE)$ where $ObjStmts$ is 
	a set of object-level statements and $ObjE$ is a binary 
	relation over them; 
	and a function $Obj\Pi: (ObjStmts \cup ObjE)  
	\rightarrow \mathfrak{p}(Annos)$  
	where $Annos$ is a set of annotations. 
	\hfill$\spadesuit$ 
\end{definition}   
To be able to tell 
whether one object-level annotated graph is smaller than another, 
we define the following order  $\unlhd$. 

\begin{definition}[Order on object-level annotated 
	graphs] \rm  $\unlhd$ is a binary relation over object-level 
	annotated graphs defined 
	as follows. 
	$((ObjStmts_1, ObjE_1),\linebreak Obj\Pi_1) \unlhd 
	((ObjStmts_2, ObjE_2), Obj\Pi_2)$ iff  
	the following conditions hold. 
	\begin{itemize} 
		\item $(ObjStmts_1, ObjE_1)$ is a subgraph 
			of $(ObjStmts_2, ObjE_2)$, 
			{\it i.e.} $ObjStmts_1 \subseteq ObjStmts_2$ 
			and $ObjE_1 \subseteq ObjE_2$.\footnote{Or 
			$ObjE_1 \subseteq (ObjE_2 \cap (ObjStmts_1 \times 
			ObjStmts_1))$. But this is redundant 
			since $(ObjStmts_1, ObjE_1)$ must be a graph.} 
		\item For every member $u$ of $ObjStmt_1 \cup ObjE_1$,  
			$Obj\Pi_1(u) \subseteq Obj\Pi_2(u)$. \hfill$\spadesuit$ 
	\end{itemize} 
\end{definition} 
\begin{proposition} 
     $\unlhd$ is a preorder. 
\end{proposition}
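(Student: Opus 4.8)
The plan is to unwind the definition of $\unlhd$ and check the two defining conditions of a preorder---reflexivity and transitivity---each of which will reduce immediately to the corresponding property of set inclusion $\subseteq$. No deeper structural fact about graphs or annotation functions is needed; the content of the proof is entirely that $\subseteq$ is itself reflexive and transitive.

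For reflexivity, I would fix an arbitrary object-level annotated graph $((ObjStmts, ObjE), Obj\Pi)$ and verify the two bullet points of the definition of $\unlhd$ against itself: the first bullet asks for $ObjStmts \subseteq ObjStmts$ and $ObjE \subseteq ObjE$, and the second asks for $Obj\Pi(u) \subseteq Obj\Pi(u)$ for every $u \in ObjStmts \cup ObjE$; all three hold by reflexivity of $\subseteq$. For transitivity, I would assume $((ObjStmts_1, ObjE_1), Obj\Pi_1) \unlhd ((ObjStmts_2, ObjE_2), Obj\Pi_2)$ and $((ObjStmts_2, ObjE_2), Obj\Pi_2) \unlhd ((ObjStmts_3, ObjE_3), Obj\Pi_3)$. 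The subgraph conditions give $ObjStmts_1 \subseteq ObjStmts_2 \subseteq ObjStmts_3$ and $ObjE_1 \subseteq ObjE_2 \subseteq ObjE_3$, so transitivity of $\subseteq$ makes $(ObjStmts_1, ObjE_1)$ a subgraph of $(ObjStmts_3, ObjE_3)$. For the annotation condition, I would take an arbitrary $u \in ObjStmts_1 \cup ObjE_1$, note that the first subgraph inclusion places $u$ in $ObjStmts_2 \cup ObjE_2$, hence the annotation clauses of both hypotheses apply and yield $Obj\Pi_1(u) \subseteq Obj\Pi_2(u) \subseteq Obj\Pi_3(u)$; transitivity of $\subseteq$ then gives $Obj\Pi_1(u) \subseteq Obj\Pi_3(u)$, establishing $((ObjStmts_1, ObjE_1), Obj\Pi_1) \unlhd ((ObjStmts_3, ObjE_3), Obj\Pi_3)$.

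There is no real obstacle in this argument. The only point that benefits from a moment's attention is the step in the transitivity case where, to invoke the second hypothesis's annotation condition at $u$, one must first use the first hypothesis's subgraph inclusion to know that $u$ lies in the domain $ObjStmts_2 \cup ObjE_2$ over which that condition is stated. It is also worth remarking explicitly that only reflexivity and transitivity are asserted here (no antisymmetry), so these two verifications are exactly what is required.
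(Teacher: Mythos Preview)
Your proposal is correct and follows the same approach as the paper, which simply asserts that reflexivity and transitivity hold ``trivially'' without spelling out the details. Your version is more careful---in particular noting that one needs $u \in ObjStmts_2 \cup ObjE_2$ before invoking the second hypothesis's annotation clause---but the underlying argument is identical.
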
  
\begin{proof} 
    For any $\eta_1$, $\eta_2$ and $\eta_3$ each denoting an object-level 
	annotated graph, 
     $\eta_1 \unlhd \eta_1$ trivially; and $\eta_1 \unlhd \eta_2$ and $\eta_2 
	\unlhd \eta_3$ 
	materially imply $\eta_1 \unlhd \eta_3$ trivially.  \hfill$\Box$ 
\end{proof}

\subsection{The role of predicate symbols and variables} 
To clarify the role of predicate symbols and variables in our semantics, 
we first recall how statements are treated in first-order logic.
Consider two sentences 
``{\it Tom is a nice person.}'' 
and ``{\it Tom gives me flowers.}''  
Let these be represented  
via precicate symbols $p_1$ and $p_2$. 

Suppose we are interested in representing   
the sentence ``{\it If Tom is a nice person, 
Tom gives me flowers.}" with them. 
When they have arity 0, 
$p_1 \supset p_2$ represents the new sentence.  
The internal content of $p_1$ and $p_2$ are inaccessible, so they have to be handled atomically. 
By allowing 
a positive arity, we move from concrete sentences to 
schematic sentences. For example, 
if $p_1$ has arity 1 and $p_1(Tom)$ 
denotes the corresponding sentence, $p_1$ effectively represents  
the following skeleton sentence ``{\it $\star 1$ is a nice person}.'' 
which has a placeholder $\star 1$ to be replaced by $Tom$. 
Similarly, if $p_2$ has arity 3 and 
$p_2(Tom, me, flowers)$ denotes the corresponding sentence, 
$p_2$ effectively represents  
the following skeleton sentence ``{\it $\star 1$ gives $\star 2$ $\star 3$},'' 
\begin{figure}[!t] 
	\begin{center} 
	      \begin{tikzcd} 
		      u_1:\{\alpha_1\}
		       \arrow[r]{}{\{\alpha_2\}}
		      & u_2:\{\alpha_1\}
		      \arrow[d]{}{\{\alpha_2\}} \\ 
		       u_3:\{\alpha_3, \alpha_2\}
		       &  u_4:\{\alpha_4\} \arrow[l]{}{\{\alpha_2\}}\\
		       u_5:\{\alpha_5\}
		       \arrow[u]{}{\{\alpha_6\}}
	\end{tikzcd} 
		{\ }$\qquad$ $\qquad$$\qquad$$\qquad$ 
		\begin{tikzcd} 
			{\color{purple}{\star 1}}:\{\alpha_1\}
			\arrow[r]{}{\{\alpha_2\}}
			& u_2:{\color{purple}{\{\star 2\}}}
			\arrow[d]{}{\{\alpha_2\}} \\ 
			u_3:\{\alpha_3, {\color{purple}{\star 3}}\}
			&  u_4:\{\alpha_4\} 
			\arrow[l]{}{{\color{purple}{\{\star 3\}}}}\\
			u_5:\{\alpha_5\}
			\arrow[u]{}{\{\alpha_6\}}
	\end{tikzcd} 

	\end{center}    
	\caption{\pmb{Left}: An 
	example of object-level annotated graph.  
	$u_i$ ($1 \leq i \leq 5$) is a member of $ObjStmts$ 
	and $\alpha_j$ ($1 \leq j \leq 6$) is a member of 
	$Annos$. 
	\pmb{Right}: An example of skeleton 
	annotated graph with 3 placeholders.} 
	\label{fig_predicate_graph} 
\end{figure}
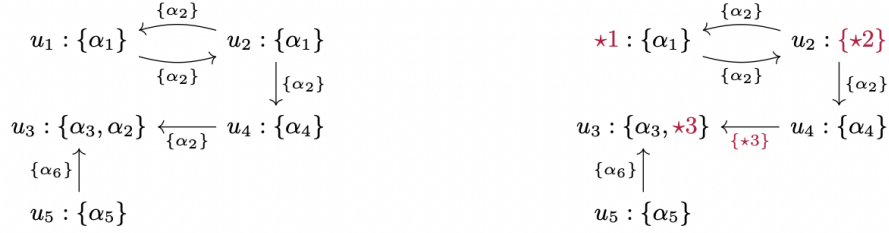 
with each placeholder replaced by the corresponding constant in the tuple. 
Instead of a constant, we can of course pass a variable to a placeholder 
and quantify it. 

{\it We take an analogy} for our semantics.  
Just as (complete) sentences were building blocks, 
we treat (complete) annotated graphs 
as building blocks.  
Let the annotated graph in Fig. \ref{fig_graph_domain_of_discourse} be such a building block. 
We denote it—or a schematic version of it—by a predicate symbol 
$p$.
When $p$'s arity is 0, $p$ denotes the object-level annotated graph itself; when the arity is 
a positive integer $n$, it denotes a skeleton annotated graph with 
$n$ placeholders.

Fig. \ref{fig_predicate_graph} illustrates this. 
On the left is an object-level annotated graph where 
$u_i$ ($1 \leq i \leq 5$) is an object-level statement 
and $\alpha_i$ ($1 \leq i \leq 6$) is an annotation. 
When $p$'s arity is 0, it denotes this entire object-level 
annotated graph. When the arity is 3,  
$p$ could denote the skeleton annotated graph on the right.  
By substituting 
$u_1$ for the first placeholder, 
$\alpha_1$ for the second 
and $\alpha_2$ for both occurrences of the third placeholder, we obtain $p(u_1, \alpha_1, \alpha_2)$ denoting the 
same object-level annotated graph on the left. Here, we may say that the skeleton annotated 
graph as denoted by $p$ is {\it instantiable} by 
the tuple of members of object-level statements and/or annotations  
$(u_1, \alpha_1, \alpha_2)$. 

With this intuition, the definition of a 
skeleton annotated graph is as follows. 

\begin{definition}[Skeleton annotated graphs]  \rm  
	A {\it special natural number} is  
	a natural number prefixed by $\star$.\footnote{$\star1$, $\star2$, 
	and so on are all special natural numbers.} 
	A {\it skeleton statement} is either an object-level 
	statement or a special natural  number. A {\it skeleton annotation} 
	is either an annotation or a special natural number. 
	A {\it skeleton annotated graph} is a pair of: 
	a graph $(SkelStmts,SkelE)$ where $SkelStmts$ is  
	a set of object-level statements and/or 
	special natural numbers and 
	$SkelE$ is a binary 
	relation over them; and a function $Skel\Pi: (SkelStmts \cup 
	SkelE) \rightarrow \mathfrak{p}(SkelAnnos)$ 
	where $SkelAnnos$ is a set of annotations and/or special natural numbers. 

	A  {\it degree-$n$ skeleton annotated graph} 
	is a skeleton annotated graph in which 
	$\star1, \ldots, \star n$ all occur 
	but not any $\star j$ for $n < j$.  
	No special natural numbers occur in a degree-0 skeleton 
	annotated graph. 
	\hfill$\spadesuit$ 
\end{definition} 

\noindent The definition of instantiability of a degree-$n$ 
skeleton annotated graph is: 
\begin{definition}[Instantiability] \label{ex_instantiability}\rm 
	Given a degree-$n$ skeleton annotated graph\linebreak 
	$((SkelStmts, SkelE), Skel\Pi)$ 
	and 
	a member $(u_1, \ldots, u_n)$ 
	of 
	{\small $(ObjStmts \cup Annos)^{\times n}$}, \linebreak
	let $((SkelStmts, SkelE), Skel\Pi)[u_1 \mapsto \star 1, \ldots, u_n \mapsto \star n]$ 
	denote an object-level annotated graph as the result of 
	simultaneously substituting $u_i$ into $\star i$ ($1 \leq i \leq n$) 
	occurring in $SkelStmts \cup SkelAnnos$. 
	 
	Then, given an object-level annotated graph $(ObjG, Obj\Pi)$, 
	a degree-$n$  skeleton annotated graph $(SkelG, Skel\Pi)$,   
	and 
	a member $(u_1, \ldots, u_n)$ 
	of 
	$(ObjStmts \cup Annos)^{\times n}$, we say 
	{\it $(u_1, \ldots, u_n)$ instantiates $(SkelG, Skel\Pi)$   
	below $(ObjG, Obj\Pi)$} 
			  iff 
			  $(SkelG, Skel\Pi)[u_1 \mapsto \star 1,\ldots, 
			   u_n \mapsto \star n] \unlhd (ObjG, Obj\Pi)$.   
			   \hfill$\spadesuit$ 
\end{definition} 
\begin{figure}[!t] 
	\begin{tikzcd} 
		{\star 1}:\{u_1\}
		       \arrow[r,bend right=15,swap]{}{\{\star 2\}}
			& {u_2}:{\{u_3\}}
		      \arrow[l, shift left, bend right=15,swap]{}{\{\star 2\}} 
	\end{tikzcd}    
		 {\ }\qquad\qquad\qquad
		 \begin{tikzcd} 
		{u_4}:\{u_1\}
		       \arrow[r,bend right=15,swap]{}{\{u_5, u_6\}}
			& {u_2}:{\{u_3\}}
		      \arrow[l, shift left, bend right=15,swap]{}{\{u_5\}} 
			 \arrow[r,bend right=15,swap]{}{\{u_7\}}
			& {u_8}:{\{u_1\}}
		      \arrow[l, shift left, bend right=15,swap]{}{\{u_7\}} 
	\end{tikzcd}  

		 \caption{\pmb{Left}: a degree-2 skeleton annotated graph.
		 \pmb{Right}: an object-level annotated graph. 
		 }
		 \label{fig_small_fig} 
	 \end{figure}
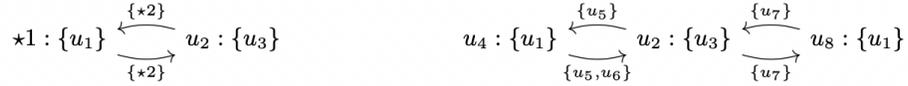 

\begin{example}[Instantiability] 
	 Let $u_1$ and $u_3$ be a member of $Annos$ 
	 and let $u_2$ be a member of $ObjStmts$, 
	 then the left graph in Fig. \ref{fig_small_fig} 
	 is a \mbox{degree-2} skeleton annotated graph.  
	 Meanwhile, an object-level annotated graph is on the right hand 
	 side of Fig. \ref{fig_small_fig}. 
	 	 It holds that  
	 both $(u_4, u_5)$ and $(u_8, u_7)$ instantiate 
	 the \mbox{degree-2} skeleton annotated graph  
	 below the object-level annotated graph. 
	 \hfill$\clubsuit$ 
\end{example} 

\subsection{Symbolic representation of annotated graphs}  
So far, we have focused on object-level statements and annotations. 
To move to the syntactic level, where reasoning is conducted over terms (see Section 
\ref{sec_technical_preliminaries}), 
we next define the symbolic form of an annotated graph.
\begin{definition}[Typed discussion graphs] \rm   
	A {\it typed discussion graph}  
	is a pair of: a graph $(V, E)$; and a typing function 
	$\mathcal{T}$, where each node is a term and  
	$\mathcal{T}$ assigns a set of terms to each member of $V \cup E$.  
	A {\it skeleton typed discussion graph}  
	is a pair of: a graph $(SkelV, SkelE)$; and 
	a typing function $Skel\mathcal{T}$, where each node is a term 
	or a special natural number and $Skel\mathcal{T}$ assigns 
	a set of terms and/or special natural numbers to each 
	member of $SkelV \cup SkelE$.    

	A {\it degree-$n$ skeleton typed discussion graph}  
	is a skeleton typed discussion graph in which 
	$\star 1, \ldots, \star n$ all occur but not any 
	$\star j$ for $n < j$. No special natural numbers 
	occur in a degree-0 skeleton typed discussion graph. 
	\hfill$\spadesuit$ 
\end{definition} 
These symbolic representations allow 
for graphical description of predicate symbols. 
Instead of some letter(s) $p$,  
we can agree to use the skeleton typed discussion graph, {\it e.g.} 
\adjustbox{scale=0.7}{[\begin{tikzcd} 
		{\star 1}:\{c_1\}
		       \arrow[r,bend right=15,swap]{}{\{\star 2\}}
			& {c_2}:{\{c_3\}}
		      \arrow[l, shift left, bend right=15,swap]{}{\{\star 2\}} 
\end{tikzcd}],    
}
as a more descriptive version of $p$. 
\subsection{Discussion graph semantics}  
The preceding subsections touched upon 
the domain of discourse and interpretation 
of mostly predicate symbols. Our semantic structure is formally: 
\begin{definition}[Discussion graph structures]  \rm 
	The {\it domain of discourse} is some 
	object-level annotated graph $((ObjStmts, ObjE), Obj\Pi)$. 
	An {\it interpretation} $\mathcal{I}$ is a function defined as follows. 
	(1) For every function symbol $f$ of arity $n \geq 0$, 
	   $\mathcal{I}(f)$ is a function $(ObjStmts \cup Annos)^{\times n} 
	\rightarrow (ObjStmts \cup Annos)$. In case $n = 0$, 
	 $\mathcal{I}(f)$ is a member of $(ObjStmts \cup Annos)$. 
	(2) For every predicate symbol $p$ of arity $n \geq 0$, 
	  $\mathcal{I}(p)$ is a degree-$n$ skeleton annotated 
	  graph $((SkelStmts, SkelE), Skel\Pi)$ where 
	  $SkelStmts \cup \range(Skel\Pi)$ is a subset of  
	  $ObjStmts \cup Annos \cup \{\star 1, \ldots, \star n\}$. 
	  A {\it variable assignment} $\mu$ is a function 
	$\pmb{Vars} \rightarrow (ObjStmts \cup Annos)$.   
	
	An {\it evaluation} $\evalit$ is a pair of some interpretation $\mathcal{I}$ 
	and some variable assignment $\mu$. The following rules are enforced. 
	\begin{itemize} 
		\item For any 
			variable $x \in \pmb{Vars}$,  $\evalit(x)$ 
			denotes $\mu(x)$. 
		\item For any function symbol $f \in \pmb{Funcs}$,  
			$\evalit(f)$ denotes $\mathcal{I}(f)$. 
		\item For any function symbol $f$ of arity $n$ 
			and an $n$-tuple of terms $(t_1, \ldots, t_n)$, \linebreak
			$\evalit(f(t_1, \ldots, t_n))$ 
			denotes $\evalit(f)(\evalit(t_1), \ldots, \evalit(t_n))$. 
		\item For any predicate symbol $p \in \pmb{Preds}$, 
			$\evalit(p)$ denotes $\mathcal{I}(p)$.  
			\begin{itemize} 
				\item for a graphical 
					predicate {\small [$((SkelV, SkelE), Skel\mathcal{T})$]}, 
					{\small $\evalit([((SkelV, SkelE),\linebreak Skel\mathcal{T})])$} 
					is a skeleton annotated graph 
					as the result of 
					replacing every term $t$ 
					occurring in 
					{\small $((SkelV, SkelE), Skel\mathcal{T})$} 
					by $\evalit(t)$. 
			\end{itemize} 
	\end{itemize}  
	A {\it discussion graph structure} is 
	a tuple {\small $((ObjStmts, ObjE), Obj\Pi, \evalit)$}   
	of the domain of discourse $((ObjStmts, ObjE), Obj\Pi)$
	and an evaluation $\evalit$.  \hfill$\spadesuit$ 
\end{definition}

The central task in defining 
the semantics of an existing logic is to specify how atomic 
formulas are evaluated. 
In our case, a given discussion graph structure 
evaluates $p(\overrightarrow{t})$ true  
iff the skeleton annotated graph $\evalit(p)$  
is instantiable by $\evalit(\overrightarrow{t})$ 
in such a way that the instantiated annotated graph 
is smaller than the object-level annotated graph.   
\begin{definition}[Satisfaction] \rm 
	For any discussion graph structure 
	$(\mathcal{M}, \evalit)$ with $\mathcal{M} \equiv 
	((ObjStmts, ObjE), Obj\Pi)$, 
	and any formulas, 
	we define the satisfaction relation $\models$ as follows.  
	{\small 
   \begin{itemize} 
	   \item $\mathcal{M}, \evalit \models t_1 = t_2$ iff
		   $\evalit(t_1) = \evalit(t_2)$ in 
		   $ObjStmts \cup Annos$. 
	   \item $\mathcal{M},\evalit \models p(\overrightarrow{t})$   
		   iff $\evalit(\overrightarrow{t})$ instantiates 
		   $\evalit(p)$ below $\mathcal{M}$. 
	   \item $\mathcal{M},\evalit \models \top$. 
	   \item $\mathcal{M},\evalit \not\models \bot$ (It is not the case 
		   that $\mathcal{M}, \evalit \models \bot$.) 
	   \item $\mathcal{M},\evalit \models \neg F$ iff $\mathcal{M},\evalit
		   \not\models F$. 
	   \item $\mathcal{M},\evalit \models F_1 \wedge F_2$ iff, 
		   for each $i \in \{1,2\}$, 
		   $\mathcal{M},\evalit \models F_i$. 
	   \item $\mathcal{M},\evalit \models F_1 \vee F_2$ iff, 
		   for at least one of $i \in \{1,2\}$, 
		   $\mathcal{M},\evalit \models F_i$. 
	   \item $\mathcal{M},\evalit \models F_1 \supset F_2$ 
		   iff $\mathcal{M},\evalit \not\models F_1$ 
		   or $\mathcal{M},\evalit \models F_2$.   
	   \item $\mathcal{M},\evalit \models \forall x.F$ 
		   iff, for every $\evalit'$, $\mathcal{M}, \evalit' \models F$ where 
		   $\evalit'$ is almost exactly $\evalit$ 
		   except that the variable assignment in $\evalit'$ 
		   may differ from $\evalit$'s variable assignment for 
		   the variable $x$.  
	   \item  $\mathcal{M}, \evalit \models \exists x.F$ 
		   iff there is some $\evalit'$ such that $\mathcal{M}, \evalit' \models F$ 
		   where $\evalit'$ is almost exactly $\evalit$ 
		   except that the variable assignment in $\evalit'$ 
		   may differ from $\evalit$'s variable assignment for the variable $x$.   
	      \end{itemize}  
	We say {\it $(\mathcal{M}, \evalit)$ 
	models $F$} iff  
	$\mathcal{M}, \evalit \models F$.}  
	 \hfill$\spadesuit$ 
\end{definition} 
\section{Equivalence-Equipped Dung's Model and Extensions} \label{sec_equivalence_equipped_dung_model_extensions}
In this section, we study an annotated-graph generalisation of Dung's argumentation 
model~\cite{Dung95}, accommodating an equivalence relation among graph nodes, 
and define a {\it novel set of 
extensions} representing acceptable nodes under certain criteria.
But first, Dung's model \cite{Dung95} 
is: $\mathcal{M}_{\mathsf{dung}} \equiv ((ObjStmts,ObjE), Obj\Pi)$  
with: $\{\} = Obj\Pi(u)$ for every $u \in ObjStmts$; 
$\{attacks\} = Obj\Pi((u_1,u_2))$ for every $(u_1,u_2) \in ObjE$.   
Several kinds of {\it extensions} defined by Dung 
are grounded in 
{\it conflict-freeness} and {\it defence}. 
\begin{itemize} 
	\item {\it Conflict-freeness}: A subset $ObjStmts'$ of $ObjStmts$ 
		is conflict-free iff there is no edge among them with $attacks$ annotation.  
	\item {\it Defence}: A subset $ObjStmts'$ of $ObjStmts$ 
		defends $u \in ObjStmts$ iff, for any $u' \in ObjStmts$, 
		if there is an $attacks$ annotated edge from $u'$ to $u$,  
		there is some $u'' \in ObjStmts'$ with an $attacks$ annotated edge from 
		$u''$ to $u'$.   
\end{itemize} 
A subset $ObjStmts'$ of $ObjStmts$ is called {\it admissible} 
iff it is conflict-free and defends every $u \in ObjStmts'$.   
An admissible subset $ObjStmts'$ is: 
\begin{itemize} 
	\item a {\it complete extension} iff it includes every graph 
		node it defends. 
	\item a {\it preferred extension} iff it is a maximal complete extension. 
	\item a {\it grounded extension} iff it is a minimal complete extension. 
	\item a {\it stable extension} iff it is a complete extension, and 
		for any $u' \in (ObjStmts \backslash\linebreak ObjStmts')$, there is some 
		$u'' \in ObjStmts'$ such that $(u'', u') \in ObjE$. 
\end{itemize} 

\noindent \textbf{Potential issues of conflict-freeness and defence.}   
$\mathcal{M}_{\mathsf{dung}}$ treats every graph node distinctly (because $ObjStmts$ is a set). 
In practice, utterances in a dialogue may not be distinct. 
Even if they are, 
the context of dialogue progression 
may treat them equally. 
Thus, if there is an annotation guideline by which utterances with the same content 
will be given a unique utterance ID, the same ID may be assigned to multiple utterances. 

With the discussion graph semantics accommodating annotations, 
we can express a more general argumentation model   
that admits an equivalence relation among graph nodes: $\mathcal{M}_{\sim} \equiv 
((ObjStmts, ObjE), Obj\Pi)$ with: $|Obj\Pi(u)| =1$ for every $u \in ObjStmts$; and $\{attacks\} = Obj\Pi((u_1, u_2))$ 
for every $(u_1, u_2) \in ObjE$. The annotation on $u \in ObjStmts$ 
is meant to be its utterance ID.

In this generalised model $\mathcal{M}_{\sim}$, both conflict-freeness and defence  
of $\mathcal{M}_{\mathsf{dung}}$ 
face a potential issue. To illustrate them, let's look at the annotated graph in the top 
part of Fig. \ref{fig:counterexample} in which 
every graph node is distinct (because $ObjStmts$ is a set), but where 
$u_3$ and $u_5$ are equivalent.  

\begin{itemize} 
	\item {\it Possible failure of conflict-freeness}: 
		$u_3$ and $u_4$ 
		form a conflict-free set. 
		However, they are not quite conflict-free when we heed the equivalence,  
		as $u_4$ has an $attacks$ annotated edge into $u_5$ 
		equivalent to $u_3$.  
	\item {\it Possible failure of defence}:  
		$\{u_1\}$ defends $u_3$.  
		However, $\{u_1\}$ does not defend $u_5$ that is equivalent to $u_3$.  
		Thus, if the equivalence is taken into account, 
		this defence is partial and incomplete. 
\end{itemize} 
\begin{figure}[!t]  
		\begin{center} 
	\begin{tikzcd}  
		u_1:\{ID_1\}  \arrow[r]{}{\{attacks\}} & u_2:\{ID_2\} \arrow[r]{}{\{attakcs\}} & u_3: \{ID_3\}
		& u_4:\{ID_4\} \arrow[r]{}{\{attacks\}} & u_5:\{ID_3\}\\
	\end{tikzcd}  
			\hrule 
		\end{center} 

		\begin{center}  
	\begin{tikzcd}  
		u_1:\{\}  \arrow[r]{}{\{attacks\}} & u_2:\{\} \arrow[r]{}{\{attakcs\}} & u_3': \{\}
		& u_4:\{\} \arrow[l]{}{\{attacks\}} \\ 
	\end{tikzcd}  
		\end{center}

		 \caption{\textbf{Top}: An example of an annotated graph 
		 with some equivalent graph nodes. 
		 \textbf{Bottom}: An annotated graph derived from 
		 collapsing the above annotated graph for equivalent nodes $u_3$ and $u_5$ 
		 into $u_3'$. 
		 } 
		 \label{fig:counterexample} 
\end{figure} 
Reduction of $\mathcal{M}_{\sim}$ into 
$\mathcal{M}_{\mathsf{dung}}$ by merging equivalent graph nodes into one 
(see the bottom part of Fig. \ref{fig:counterexample}) is acceptable in certain situations. 
But it generally results in an over-approximation and a loss of information.  
To better control the effect of equivalence, we 
introduce the following distinctions:

\begin{itemize} 
	\item {\it Simple-conflict-freeness}: Defined as conflict-freeness in Dung's model 
		(modulo annotations on nodes). 
	\item {\it Wide-conflict-freeness}: a subset $ObjStmts'$ of $ObjStmts$ is wide-conflict-free 
		iff there is no edge among cl$_{\sim}(ObjStmts')$, where 
		cl$_{\sim}(ObjStmts')$ is the closure of $ObjStmts'$ under node equivalence. 
		{\it Example:} $\text{cl}_{\sim}(\{u_3\}) = \text{cl}_{\sim}(\{u_5\}) = \{u_3, u_5\}$   
		in the top annotated graph of Fig. \ref{fig:counterexample}. 
	\item {\it Simple-defence}: Defined as defence in Dung's model (modulo 
		annotations on nodes). 
	\item {\it Wide-defence}: a subset $ObjStmts'$ of $ObjStmts$ wide-defends 
		$u \in ObjStmts$ iff $ObjStmts'$ simple-defends 
		each member of $\text{cl}_{\sim}(\{u\})$. 
\end{itemize} 
\begin{definition}[Simple and wide admissibilities]\label{def_simple_wide_admissible} \rm
     Given $\mathcal{M}_{\sim} \equiv ((ObjStmts,\linebreak ObjE), Obj\Pi)$,  
	we say 
	$ObjStmts' \subseteq ObjStmts$ is: {\it simple-admissible} iff it is simple-conflict-free and 
	it simple-defends every $u \in ObjStmts'$;  
	{\it wide-admissible} iff it is wide-conflict-free and it wide-defends every
	$u \in ObjStmts'$.  \hfill$\spadesuit$ 
\end{definition} 

\begin{proposition}[Existence of simple and wide admissible sets] 
 For $\mathcal{M}_{\sim} \equiv ((ObjStmts,ObjE), Obj\Pi)$ 
	and for  $\sigma \in \{\text{simple}, \text{wide}\}$, 
	there is $ObjStmts' \subseteq ObjStmts$ 
	such that $ObjStmts'$ is $\sigma$-admissible. 
\end{proposition}  
\begin{proof}  
	$\{ \}$ is both simple and wide admissible.  \hfill$\Box$ 
\end{proof} 

\begin{proposition}[Relationship between simple and wide admissibilities] \label{prop_relationship_simple_wide_admissibilities} 
For $\mathcal{M}_{\sim} \equiv ((ObjStmts,ObjE), Obj\Pi)$ 
and for $ObjStmts' \subseteq ObjStmts$, if $ObjStmts'$ is wide-admissible, then 
$ObjStmts'$ is simple-admissible. 
\end{proposition}   
\begin{proof} 
	$ObjStmts' \subseteq \text{cl}_{\sim}(ObjStmts')$. Vacuous. \hfill$\Box$ 
\end{proof} 

The converse---that simple-admissibility
implies wide-admissibility---does not always hold. We will see 
that in Example \ref{example_illustration_sigamtau_complete_extensions}. 
But for any 
$\mathcal{M}_{\mathsf{dung}}$, the two versions of admissibilities are trivially indistinguishable. \\

\noindent \textbf{Generalised extensions.} 
We now generalise Dung's `extensions'. 
In $\mathcal{M}_{\mathsf{dung}}$, a complete extension imposes an extra condition 
that a (simple) admissible
set of graph nodes includes every node it (simple) defends. This {\it closure by defence} 
is not consistent with the wide-admissibility, so we  
revise it as follows. 
\begin{definition}[Closure by defence]  \rm 
   Given $\mathcal{M}_{\sim} \equiv 
	((ObjStmts,ObjE),\linebreak Obj\Pi)$ and $ObjStmts' \subseteq ObjStmts$, 
	we say $ObjStmts'$ is: 
	closed under simple (resp. wide) defence iff it includes every graph node it 
	simple- (resp. wide-) defends.  \hfill$\spadesuit$ 
\end{definition}  
This closure does not concern closure by equivalence which we 
define anew. 
\begin{definition}[Closure by equivalence] \rm 
     Given $\mathcal{M}_{\sim} \equiv 
	((ObjStmts,\linebreak ObjE),Obj\Pi)$ and $ObjStmts' \subseteq ObjStmts$, 
	we say $ObjStmts'$ is: {\it closed under equivalence} iff 
	$ObjStmts' = \text{cl}_{\sim}(ObjStmts')$.  \hfill$\spadesuit$ 
\end{definition} 
These closures give rise to the following classifications of 
$\mathcal{M}_{\sim}$ extensions.  
\begin{definition}[Extensions for $\mathcal{M}_{\sim}$] \rm 
     Given $\mathcal{M}_{\sim} \equiv 
	((ObjStmts,ObjE),Obj\Pi)$ and $ObjStmts' \subseteq ObjStmts$,  
	let $\sigma$ be one of $\{\text{simple},\text{wide}\}$ and 
	let $\tau$ be one of $\{\sigma \text{defence}, \text{equivalence}\}$.  
	Let $ObjStmts'$ be $\sigma$-admissible, 
	we say $ObjStmts'$ is: 
	\begin{itemize} 
		\item $\sigma\tau$-complete iff 
		      it is closed under $\tau$. 
	        \item $\sigma$-complete iff  
		      it is closed under both $\sigma$defence and 
			equivalence. 
	        \item $\sigma\tau$-preferred iff 
		      it is maximally $\sigma\tau$-complete.
	      	\item $\sigma$-preferred iff 
		      it is maximally $\sigma$-complete.
	      	\item $\sigma\tau$-grounded iff 
		      it is minimally $\sigma\tau$-complete. %by set inclusion.  
	        \item $\sigma$-grounded iff 
		      it is minimally $\sigma$-complete. % by set inclusion. 
	        \item $\sigma\tau$-stable iff  
		      it is $\sigma\tau$-complete, and for any 
			$u \in ObjStmts \backslash ObjStmts'$, 
			there is some $u' \in ObjStmts'$ such that 
			$(u', u) \in ObjE$. 
		\item $\sigma$-stable iff   
		      it is $\sigma$-complete, and for any 
			$u \in ObjStmts \backslash ObjStmts'$, 
			there is some $u' \in ObjStmts'$ such that 
			$(u', u) \in ObjE$.  \hfill$\spadesuit$ 
	\end{itemize} 
\end{definition} 

In case $\mathcal{M}_{\sim}$ is a Dung's model (modulo annotations on nodes), a $\sigma\tau$-complete extension is just a simple complete extension; similarly 
for all the other types of extensions. However, in general, 
most of $\sigma\tau$-x extensions are distinct notions.  
For brevity, we may write {\it simple defence-complete} to mean 
{\it simple simpledefence-complete}, {\it wide defence-complete} 
to mean {\it wide widedefence-complete} and so on. 

\begin{figure}[!t]  
		\begin{center} 
	\begin{tikzcd}  
		u_1:\{ID_1\}  \arrow[r]{}{\{attacks\}} & u_2:\{ID_2\} \arrow[r]{}{\{attakcs\}} & u_3: \{ID_3\}
		& u_4:\{ID_4\} \arrow[r]{}{\{attacks\}} & u_5:\{ID_3\}\\
	\end{tikzcd}  
			\hrule 
		\end{center} 

		\begin{center}  
	\begin{tikzcd}  
		u_1:\{ID_1\} 
		\arrow[r]{}{\{attacks\}} & u_2:\{ID_2\} \arrow[r]{}{\{attakcs\}} & u_3: \{ID_3\} \\
		u_4:\{ID_1\} \arrow[r]{}{\{attacks\}} & u_5:\{ID_4\} \arrow[l]{}{\{attacks\}} 
		\arrow[r]{}{\{attacks\}}
		& u_6:\{ID_3\}
	\end{tikzcd}  
		\end{center} 
		 \caption{\textbf{Top}: An example of an object-level annotated graph 
		 with equivalent graph nodes.  
		 \textbf{Bottom}: Another example of an object-level annotated graph 
		 with equivalent graph nodes.} 
		 \label{fig_illustration_complete_extension_distinction} 
\end{figure} 

\begin{example}[Illustration of $\sigma\tau$-complete extensions] \label{example_illustration_sigamtau_complete_extensions} 
    Look at Fig. \ref{fig_illustration_complete_extension_distinction}. 
	For the top annotated graph, $\{ \}, \{u_1\}, \{u_4\}, \{u_1, u_4\}$ are wide-admissible 
	(and therefore also simple-admissible), 
	and additionally $\{u_1, u_3\}, \{u_1, u_3, u_4\}$ are simple-admissible. 
    \begin{itemize}  
	    \item Simple defence-complete extensions: $\{u_1, u_3, u_4\}$. 
	    \item Wide defence-complete extensions: $\{u_1, u_4\}$. 
	    \item Simple equivalence-complete extensions: $\{ \}, \{u_1\}, \{u_4\}, \{u_1, u_4\}$. 
	    \item Wide equivalence-complete extensions: $\{ \}, \{u_1\}, \{u_4\}, \{u_1, u_4\}$.  
	    \item Simple complete extensions: (does not exist) 
	    \item Wide complete extensions: $\{u_1,u_4\}$. 
    \end{itemize}  
	For the bottom, $\{ \}, \{u_4\}, \{u_5\}, \{u_1, u_4\}, 
	\{u_1, u_3, u_4\}, \{u_1, u_4, u_6\}, \{u_1, u_3, u_4, u_6\}$ 
	are wide-admissible (and simple-admissible). Additionally, 
	$\{u_1\}, \{u_1, u_3\}, \{u_4, u_6\},\linebreak \{u_1, u_5\}, \{u_1, u_3, u_5\}$ 
	are simple-admissible. 
	\begin{itemize} 
		\item Simple defence-complete extensions: $\{u_1, u_3\},  
			\{u_1, u_3, u_5\}, \{u_1, u_3, u_4, u_6\}$. 
		\item Wide defence-complete extensions:  
			$\{ \}, \{u_4\}, \{u_5\}, \{u_1, u_3, u_4, u_6\}$. 
		\item Simple equivalence-complete extensions: $\{ \}, \{u_5\}, 
			\{u_1,u_4\},  \{u_1,u_3,u_4,u_6\}$.
		 \item Wide equivalence-complete extensions: $\{  \},    
			 \{u_5\}, \{u_1, u_4\}, \{u_1,u_3,u_4,u_6\}$. 
		 \item Simple complete extensions: $\{u_1, u_3, u_4, u_6\}$. 
		 \item Wide complete extensions: $\{ \}, \{u_5\}, \{u_1,u_3,u_4,u_6\}$.  
			 \hfill$\clubsuit$ 
	\end{itemize} 
\end{example}  
In Example \ref{example_illustration_sigamtau_complete_extensions}, the simple and wide 
equivalence-complete 
extensions coincide. This is no coincidence. 
\begin{theorem}[Collapse of simple and wide equivalence closures] \label{thm_collapse} 
For $\mathcal{M}_{\sim} \equiv 
	((ObjStmts,ObjE),Obj\Pi)$, $ObjStmts' \subseteq ObjStmts$ 
	and $\mu \in \{\text{complete},\linebreak \text{preferred, grounded, stable}\}$, 
	$ObjStmts'$ is a simple equivalence-$\mu$ extension iff\linebreak
		 $ObjStmts'$ is a wide equivalence-$\mu$ extension.  
\end{theorem}  
\begin{proof} 
	First, consider $\mu =$ complete. Suppose $ObjStmts'$ is a 
	simple equivalence-complete extension. Because $ObjStmts'$ is simple-admissible, 
	it is simple-conflict-free. But also $ObjStmts' = \text{cl}_{\sim}(ObjStmts')$ 
	because $ObjStmts'$ satisfies closure by equivalence, 
	so $ObjStmts'$ is also wide-conflict-free. 
	$ObjStmts'$ includes every graph node it simple-defends. But 
	$ObjStmts' = \text{cl}_{\sim}(ObjStmts')$, so it includes every graph node 
	it wide-defends. Hence, $ObjStmts'$ is a wide equivalence-complete extension. 

	Into the other direction, suppose $ObjStmts'$ is a wide equivalence-complete extension.  
	Because $ObjStmts'$ is wide-admissible, it is wide-conflict-free. 
Because $ObjStmts' = \text{cl}_{\sim}(ObjStmts')$, it is also simple-conflict-free.  
	$ObjStmts'$ includes every graph node it wide-defends. Again, 
	$ObjStmts' = \text{cl}_{\sim}(ObjStmts')$, so it includes every graph node it simple-defends, too.   

	Now, consider $\mu =$ preferred. Since the set of simple equivalence-complete extensions 
	coincides with that of wide equivalence-complete extensions, 
	by definition, that of simple equivalence-preferred extensions coincides with 
	that of wide equivalence-preferred extensions. Similarly for $\mu =$ grounded. 
	
	Finally, consider $\mu =$ stable. Suppose $ObjStmts'$ is a 
	simple equivalence-stable extension. It is simple-conflict-free 
	and for any node not in $ObjStmts'$, some member of $ObjStmts'$ has an $attacks$ annotated 
	edge to it. Further,\linebreak $ObjStmts' = \text{cl}_{\sim}(ObjStmts')$. So, $ObjStmts'$ 
	is wide-conflict-free. Closure by defence is obvious. 
	The other direction is vacuous. 
	\hfill$\Box$ 

\end{proof}

As is clear from the bottom annotated graph in Example \ref{example_illustration_sigamtau_complete_extensions}, 
there may exist multiple minimal wide defence-complete extensions and hence multiple 
wide defence-grounded extensions. Recall that  
the grounded extension is the least fixpoint (lfp) of a function for 
each $\mathcal{M}_{\mathsf{dung}}$. It turns out the least fixpoint characterisation 
of wide defence-grounded extensions is similarly possible for $\mathcal{M}_{\sim}$ 
albeit with multiple functions. 
\begin{theorem}[Characterisation of wide defence-grounded extensions]  \label{thm_characterisation_wide_defence_grounded_extensions} 
	Given $\mathcal{M}_{\sim} \equiv ((ObjStmts, ObjE), Obj\Pi)$, 
	let $\mathfrak{F}$ be the set of functions 
	$\pmb{F}: \mathfrak{p}(ObjStmts) \rightarrow \mathfrak{p}(ObjStmts)$ 
	with the following conditions for $ObjStmts' \subseteq ObjStmts$. 
	\begin{itemize} 
		\item If $ObjStmts'$ is a wide defence-complete extension, 
			then $\pmb{F}(ObjStmts') = ObjStmts'$. 
		\item Otherwise, 
			\begin{itemize} 
				\item $ObjStmts' \subset \pmb{F}(ObjStmts')$, and 
		 		\item $ObjStmts'$ wide-defends 
			each $u \in (\pmb{F}(ObjStmts') \backslash ObjStmts')$. 
			\end{itemize} 
	\end{itemize}   
	Let $\Gamma$ be the set containing 
	all minimal members of $\bigcup_{\pmb{F} \in \mathfrak{F}}\text{lfp}(\pmb{F})$.  
	For $ObjStmts' \subseteq ObjStmts$, it holds that 
	$ObjStmts' \in \Gamma$ iff $ObjStmts'$ is a wide defence-grounded extension. 
\end{theorem}  
\begin{proof} 
     $\{ \}$ is wide-admissible. If it is a wide defence-complete extension,  
	then it is a wide defence-grounded extension, since there is no 
	smaller subset of $ObjStmts$ than $\{ \}$.  
	Otherwise, we have a non-empty subset $ObjStmts'' \equiv 
	\pmb{F}(\{ \})$. By definition, $\{ \}$ wide-defends every member $u$ of $ObjStmts''$. 
	Now, suppose, by showing contradiction, $ObjStmts''$ is not wide-conflict-free. 
	Then there is some $u_1, u_2 \in ObjStmts''$, 
	some $u_1' \in \text{cl}_{\sim}(\{u_1\})$ and 
	some $u_2' \in \text{cl}_{\sim}(\{u_2\})$ such that  
	$(u_1', u_2') \in ObjE$. However, since $\{ \}$ wide-defends $u_1$ and $u_2$,  
	$\{ \}$ simple-defends $u_1'$ and $u_2'$. Thus, 
	neither $u_1'$ nor $u_2'$ has an $attacks$ annotated incoming edge, contradiction.  
	$\pmb{F}(\{ \})$ is therefore wide-admissible. 
	In general, for any wide-admissible subset $ObjStmts''$, 
	$\pmb{F}(ObjStmts'')$ is shown wide-admissible.   
	Since $ObjStmts$ is finite, every $\pmb{F}$ has its least fixpoint. 
	Since $\mathfrak{F}$ includes every possible $\pmb{F}$, each minimal 
	member of $\bigcup_{\pmb{F} \in \mathfrak{F}}\text{lfp}(\pmb{F})$ is by definition 
	a wide defence-grounded extension. \hfill$\Box$ 
\end{proof} 
\noindent Concerning the existence of $\mathcal{M}_{\sim}$ extensions, we have the following result. 

\begin{theorem}[On Existence]
	 Given $\mathcal{M}_{\sim}$,    
	 $\sigma \in \{\text{simple}, \text{wide}\}$, 
	 $\tau \in \{\sigma\text{defence},\linebreak \text{equivalence}\}$ 
	and $\mu \in \{\text{complete}, \text{preferred}, \text{grounded}\}$,  
	the properties below hold. 
	\begin{itemize} 
		\item  $\mathcal{M}_{\sim}$ has at least one $\sigma\tau$-$\mu$ extension.  
		\item  $\mathcal{M}_{\sim}$ has at least one wide-$\mu$ extension. 
	\end{itemize} 
	\end{theorem}  
\begin{proof} 
     $\{ \}$ is both simple and wide admissible. It is closed under node equivalence. 
	Hence, $\{ \}$ is a $\sigma$ equivalence-complete extension. 
	Since $ObjStmts$ is a finite set and $\{ \}$ is the least 
	subset of it, it follows that $\mathcal{M}_{\sim}$ has at least 
	one $\sigma$ equivalence-preferred extension and at least one $\sigma$ equivalence-grounded 
	extension. 

	For the $\sigma$ $\sigma$defence-$\mu$ extension, if $\sigma$ = simple, 
	the existence is immediate from \cite{Dung95}. If $\sigma$ = wide, 
	the existence is immediate from 
	Theorem \ref{thm_characterisation_wide_defence_grounded_extensions} 
	and the facts that a wide defence-grounded extension is a wide defence-complete extension 
	and that $ObjStmts$ is a finite set. 
	
 	Since a wide-$\mu$ extension is wide-admissible, its existence 
	is immediate from the existence of both wide defence-$\mu$ and 
	wide equivalence-$\mu$ extensions, which we have shown above. 
	\hfill$\Box$ 
\end{proof} 
Simple-$\mu$ extensions may not exist (see Example~\ref{example_illustration_sigamtau_complete_extensions}) since closure by equivalence can be in conflict with 
simple admissibility.

\section{Reasoning} 
We can reason about discussion and argumentation graphs using the discussion graph semantics of
\textsf{FOL}. 
In this section, our main goal is to firstly show that  
all $\mathcal{M}_{\sim}$ extensions are 
first-order characterisabile, thereby connecting 
the contributions of Sections \ref{sec_discussion_graph_semantics_first_order_logic_equality} and \ref{sec_equivalence_equipped_dung_model_extensions}, and secondly to show that the set of all $\mathcal{M}_{\sim}$'s extensions of a give type 
is equally first-order characterisable. 
But since we introduced the idea of reasoning about pattern detection at the beginning, 
we start with an illustrative example of that task.  
\begin{example}[Pattern detection]  
	As we saw (in Fig. \ref{fig_toulmin}), Toulmin's model requires 
	 the presence of specific annotations on object-level statements 
	 while the statements themselves are some texts. 
	      \begin{center}  
		      \adjustbox{scale=0.9}{
	      \begin{tikzcd}[row sep=small] 
		       |[alias=A1]| \star 1:\{backing\} \arrow[r]{}{\{\}} & 
		      \star 2:\{warrant\} \arrow[dr]{}{\{\}}
		      \arrow[d]{}{\{\}}\\ 
		      |[alias=A3]|	\star 3:\{grounds\} \arrow[r]{}{\{\}} & 
		      \star 4:\{qualifier\} \arrow[r]{}{\{\}} & 
		      \star 5:\{claim\} \\ 
		       & & |[alias=A2]| \star 6:\{rebuttal\} \arrow[u,swap]{}{\{\}}
	\end{tikzcd} 
		} 
	\end{center}      
        Let $\evalit$ and a predicate symbol $p$ be such that  
	$\evalit(p)$ denotes the above skeleton annotated graph.  
	Then, there is a Toulmin's model in $((ObjStmts, ObjE), Obj\Pi)$ 
	iff $((ObjStmts, ObjE), Obj\Pi, \evalit)$ models 
	    $\exists x_1.\ldots. 
	    \exists x_6.p(x_1, \ldots, x_6)$. 
\hfill$\clubsuit$ 
\hide{ 
	One example of argumentation in Dung's model is shown below. 
	\begin{center} 
	      \begin{tikzcd}[row sep=small,/tikz/execute at end picture={
			      \node (large) [rectangle, draw, fit=(A0) (A1) (A2) (A3) (A10),
			      label=above:Dung's model \cite{Dung95}] {}; 
  }]    
		      & & |[alias=A0]|  \\
		       |[alias=A1]| s_1:* \arrow[r,bend right=15,swap]{}{attacks}
		      & s_2:* \arrow[l, shift left, bend right=15,swap]{}{attacks} 
		      \arrow[r]{}{attacks} 
		      & s_3:* \arrow[r]{}{attacks} & s_4:* \arrow[r]{}{attacks} & 
		      |[alias=A2]| 
		      s_5:* \arrow[ll, bend right=30,swap]{}{attacks} \\
		      & & |[alias=A10]| 
	\end{tikzcd} 
	\end{center}   
	One example of discussion in Kunz {\it et al.}'s model is shown below. 
	\begin{center} 
		\begin{tikzcd}[row sep=small,/tikz/execute at end picture={
			      \node (large) [rectangle, draw, fit=(A1) (A2) (A3),
			      label=above:Kunz {\it et al.}'s model \cite{Kunz70} 
			      ] {}; 
  }] 
		      & |[alias=A1]| s_1:position \arrow[r]{}{responds-to} & s_2:issue 
			\arrow[dr]{}{questions}\\ 
		      |[alias=A3]|	s_3:issue \arrow[r,swap]{}{questions} & 
		      s_4:argument \arrow[r,swap]{}{supports} & s_5:position \arrow[r,swap]{}{responds-to}
			& |[alias=A2]| s_6:issue 
	\end{tikzcd} 
	\end{center}   
}
\end{example} 
We now show 
the first-order characterisability of $\mathcal{M}_{\sim}$'s extensions. 
The closest prior work is \cite{Doutre14} which 
established the propositional characterisability of $\mathcal{M}_{\mathsf{dung}}$'s 
admissible sets, as well as its complete and stable extensions. However, 
\cite{Doutre14} did not provide a full propositional  
characterisation of all $\mathcal{M}_{\mathsf{dung}}$'s extensions. 
In particular, both the preferred and grounded 
extensions—each requiring comparisons among complete extensions—were left open, 
with a note suggesting that they might be characterisable in propositional {\it dynamic} logic. 
The question thus remains: {\it what about their propositional characterisability?} 
By establishing a complete first-order characterisation of 
$\mathcal{M}_{\sim}$'s extensions, 
we address this question en route.

As preparations, we define formulas 
$k\text{-}\mathsf{CF}(t_1, t_2, \ldots, t_k)$, 
$kN\text{-}\mathsf{WCF}(t_1, t_2, \ldots, t_k)$, 
$k\text{-}\mathsf{DF}(t, t_1, t_2, \ldots, t_k)$ 
and $kN\text{-}\mathsf{WDF}(t, t_1, t_2, \ldots, t_k)$ for $0 \leq k \leq N$, 
and also $kl\text{-}\mathsf{CL}(t_1, \ldots, t_{k}, \ldots, t_{l})$ 
for $0 \leq k \leq l$.  
$\bigvee_{i \leq k} F_i$ is shorthand for 
$F_1 \vee \cdots \vee F_k$. Similarly 
for $\bigwedge_{i \leq k} F_i$.  
More generally, we use $\bigvee_{\textsf{condition}} F$   
and $\bigwedge_{\textsf{condition}} F$, 
and when \textsf{condition} is not met, 
$\bigvee_{\textsf{condition}} F$ is  $\bot$ and 
			$\bigwedge_{\textsf{condition}} F$ is $\top$. 

\begin{definition}[k-CF]\label{def_cf} \rm   
	\noindent {\small $k\text{-}\mathsf{CF}(t_1, t_2, \ldots, t_k)$} is:
	\begin{itemize} 
		\item {\small $\top$} if $0 = k$. 
		\item {\small $p_{\mathsf{D}}(t_1, t_2, \ldots, t_k) 
			\wedge \forall y_1.\forall y_2.(p_{\mathsf{D}}(y_1) 
			\wedge p_{\mathsf{D}}(y_2) 
			\wedge (\bigvee_{i \leq k}y_1 = t_i)
			\wedge (\bigvee_{i \leq k}y_2 = t_i) 
			\supset (\neg y_1 = y_2 \wedge \neg p_{\mathsf{A}}(y_1, y_2) \wedge 
			\neg p_{\mathsf{A}}(y_2, y_1)) \vee 
			(y_1 = y_2 \wedge \neg p_A(y_1)
			))$} if $1 \leq k$. \hfill$\spadesuit$ 
			\end{itemize} 
\end{definition}   
$p_{\mathsf{D}}$ is for judging 
whether (graph) nodes are distinct, and $p_{\mathsf{A}}$ is for judging 
whether a node attacks another (judging self-attack if the arity is 1). 
So, 
$k\text{-}\mathsf{CF}(t_1,\linebreak \ldots, t_k)$ is for judging 
whether $k$ distinct nodes form a simple-conflict-free set.  

In the remainder of this paper, we write  
{\it X  iff$^*$ 
$\mathcal{M}_{\sim}, (\mu, \mathcal{I}) \models F$} to mean  
{\it X iff there is some variable assignment $\mu'$ such that 
$\mathcal{M}_{\sim}, (\mu', \mathcal{I}) \models F$}.  

\begin{lemma}[Characterisability of a simple-conflict-free set] \label{lem_k_cf} 
     Given {\small $\mathcal{M}_{\sim} \equiv \linebreak
	((ObjStmts,ObjE), Obj\Pi)$}, 
 	let $\evalit$ be such that: $\evalit(p_{\mathsf{A}})$ is 
	\resizebox{3cm}{!}{$[\star 1:\{\} \xrightarrow{\{attacks\}} \star 2:\{\}]$} if $p_{\mathsf{A}}$'s arity is 2 and   
\resizebox{3cm}{!}{$
[\! 
\begin{tikzpicture}[baseline=(n.base)]
  \node (n) {$\star 1:\{\}$};
  \draw[->, looseness=8, out=10, in=-10]
    (n) to node[right, xshift=2pt] {\scriptsize$\{attacks\}$} (n);
\end{tikzpicture}
\!]
$}
	if it is 1; and $\evalit(p_{\mathsf{D}})$ 
	is 
	\resizebox{3cm}{!}{$[\star 1:\{\} \ \star 2:\{\} \ \cdots \  \star n:\{\}] $} 
	with $n$ being the arity of $p_{\mathsf{D}}$. %  and 
%	$\evalit(p_{\mathsf{AnnoEq}})$ is 
%	\resizebox{3cm}{!}{$[\star 1:\{\star 3\} \ \star 2:\{\star 3\}]$}.  
%	Let $N$ be $|ObjStmts|$ and $k$ be $0 \leq k \leq N$.  
	Let $k$ be $0 \leq k \leq |ObjStmts|$. 
	Then $\{\evalit(c_1), \ldots,\linebreak \evalit(c_k)\}$ is 
	simple-conflict-free iff$^*$ $\mathcal{M}_{\sim}, \evalit 
	\models k\text{-}\mathsf{CF}(c_1, \ldots, c_k)$. 
\end{lemma} 
\begin{proof} 
   Let $\overrightarrow{c}$ denote 
	$c_1, \ldots, c_k$.  
	
	Suppose $0 < k$. 
	First, $\{\evalit(c_1), \ldots, \evalit(c_k)\}$ 
	is trivially not a simple-conflict-free set in $\mathcal{M}_{\sim}$ 
	when either of the following conditions holds. 
	\begin{itemize} 
		\item There is a pair of natural numbers $i$ and $j$ 
			such that $1 \leq i < j \leq k$ and that 
			$\evalit(c_i) = \evalit(c_j)$. In this case, 
			$\{\evalit(c_1), \ldots, \evalit(c_k)\}$ 
			is not a set. 
		\item There is a natural number $i$ such that 
			$1 \leq i \leq k$ and that $\evalit(c_i)$ 
			is not a member of $ObjStmts$.  
			$\{\evalit(c_1), \ldots, \evalit(c_k)\}$ 
			is not a subset of $ObjStmts$. 
	\end{itemize}  
	In these cases, $\mathcal{M}_{\sim}, \evalit \not\models 
	p_{\mathsf{D}}(\overrightarrow{c})$ ({\it i.e.} it is not the case that  
$\mathcal{M}_{\sim}, \evalit \models 
	p_{\mathsf{D}}(\overrightarrow{c})$), and so 
	$\mathcal{M}_{\sim}, \evalit \not\models  
	k\text{-}\mathsf{CF}(\overrightarrow{c})$ % \vee 
	%kN\text{-}\mathsf{WCF}(\overrightarrow{c})$. 
	
	Now, let $\{\evalit(c_1), \ldots, \evalit(c_k)\}$ ($0 < k$) be a subset 
	of $ObjStmts$, which we denote by $ObjStmts'$, with $|ObjStmts'| = k$. 
	We prove that $ObjStmts'$ is a simple conflict-free set  
	iff$^*$ $\mathcal{M}_{\sim}, \evalit \models   
	p_{\mathsf{D}}(\overrightarrow{c}) 
			\wedge \forall y_1.\forall y_2.(p_{\mathsf{D}}(y_1) 
			\wedge p_{\mathsf{D}}(y_2) 
			\wedge (\bigvee_{i \leq k}y_1 = c_i)
			\wedge (\bigvee_{i \leq k}y_2 = c_i) 
			\supset (\neg y_1 = y_2 \wedge \neg p_{\mathsf{A}}(y_1, y_2) \wedge 
			\neg p_{\mathsf{A}}(y_2, y_1)) \vee 
			(y_1 = y_2 \wedge \neg p_A(y_1)
			))$. 
        
	\textbf{Only if:} 
        $\mathcal{M}_{\sim}, \evalit \models p_{\mathsf{D}}(\overrightarrow{c})$ 
       holds by assumption. Let $\evalit'$ be almost exactly $\evalit$ except that $\evalit'(y_1)$ and/or $\evalit'(y_2)$ 
	may be different from $\evalit(y_1)$ and/or $\evalit(y_2)$.  
       \begin{description} 
	       \item[\textbf{Case $\evalit'(y_1) \not\in ObjStmts$ OR $\evalit'(y_2) \not\in ObjStmts$}:]  
		       $\mathcal{M}_{\sim}, \evalit' \not\models p_{\mathsf{D}}(y_1) \wedge p_{\mathsf{D}}(y_2)$. 
		\item[\textbf{Case $\evalit'(y_1), \evalit'(y_2) \in ObjStmts$}:]{\ } 
			\begin{description} 
				\item[\textbf{Sub-case $\evalit'(y_1) \not\in ObjStmts'$ OR $\evalit'(y_2) \not\in ObjStmts'$}:]  
					$\mathcal{M}_{\sim}, \evalit' \not\models (\bigvee_{i \leq k} y_1 = c_i) 
					\wedge (\bigvee_{i \leq k} y_2 = c_i)$. 
				\item[\textbf{Sub-case $\evalit'(y_1), \evalit'(y_2) \in ObjStmts'$}:]  
					By assumption, no member of $ObjStmts'$ attacks a member of $ObjStmts'$ including itself. Therefore, 
					$\mathcal{M}_{\sim}, \evalit' \models 
					(\neg y_1 = y_2 \wedge \neg p_{\mathsf{A}}(y_1, y_2) \wedge 
					\neg p_{\mathsf{A}}(y_2,\linebreak y_1)) \vee (y_1 = y_2 \wedge \neg p_A(y_1))$. 
			\end{description} 
       \end{description} 
	Hence, $\mathcal{M}_{\sim}, \evalit \models p_{\mathsf{D}}(\overrightarrow{c}) \wedge 
       \forall y_1.\forall y_2.(p_{\mathsf{D}}(y_1) 
			\wedge p_{\mathsf{D}}(y_2) 
			\wedge (\bigvee_{i \leq k}y_1 = c_i)
			\wedge (\bigvee_{i \leq k}y_2 = c_i) 
			\supset (\neg y_1 = y_2 \wedge \neg p_{\mathsf{A}}(y_1, y_2) \wedge 
			\neg p_{\mathsf{A}}(y_2, y_1)) \vee 
			(y_1 = y_2 \wedge \neg p_A(y_1)
			))$, as required.  \\

\textbf{If}: Let $\evalit'$ be such that 
$\mathcal{M}_{\sim}, \evalit' \models 
	k\text{-}\mathsf{CF}(\overrightarrow{c})$. We have to 
	prove that, for every $\evalit$ which is almost exactly $\evalit'$ 
	except for the variable assignment, 
	$\{\evalit(c_1), \ldots, \evalit(c_k)\}$ ($0 < k$) 
	is a simple-conflict-free set. 
	But $k\text{-}\mathsf{CF}(\overrightarrow{c})$ 
	is a well-formed formula, so the difference in variable assignment makes no 
	difference in satisfiability judgement. Hence, it suffices to prove that 
	$\{\evalit'(c_1), \ldots, \evalit'(c_k)\}$ is a simple-conflict-free set. 

	First, we prove that 
	$\{\evalit'(c_1), \ldots, \evalit'(c_k)\}$ is indeed a subset of $ObjStmts$ 
	with size $k$. But $\mathcal{M}_{\sim}, \evalit' \models k\text{-}\mathsf{CF}(\overrightarrow{c})$ and therefore $\mathcal{M}_{\sim}, \evalit' \models 
	 p_{\mathsf{D}}(\overrightarrow{c})$. Consequently, 
	 $\{\evalit'(c_1), \ldots, \evalit'(c_k)\}$ is a subset of $ObjStmts$ 
	 with size $k$, as required. 

	 Now, by assumption, 
	 for each $\evalit'(c_i)$,  
	 $(\evalit'(c_i), \evalit'(c_i)) \not\in ObjE$ holds, 
	 and for each pair $(\evalit'(c_i), \evalit'(c_j))$ 
	 with $1 \leq i \not= j \leq k$, we have 
	 $(\evalit'(c_i), \evalit'(c_j)) \not\in ObjE$, as required. \\

	 Suppose $0=k$. We must prove that $\{ \}$ is 
	 simple-conflict-free iff$^*$  
	 $\mathcal{M}_{\sim}, \evalit \models 0\text{-}\mathsf{CF}$ 
	 iff$^*$ $\mathcal{M}_{\sim}, \evalit \models \top$. 

\textbf{Only if}:  We have to prove 
that, for any $\evalit'$ which is almost exactly $\evalit$ except 
$\evalit'(x)$ may not be $\evalit(x)$, 
$\mathcal{M}_{\sim}, \evalit' \models \top$, which is vacuous. 

\textbf{If}: We have to prove that $\{ \}$ is simple-conflict-free. 
But this follows from definition of simple-conflict-freeness. \hfill$\Box$ 
\end{proof}

\begin{definition}[kl-CL]\label{def_cl} \rm   
	\noindent {\small $kl\text{-}\mathsf{CL}(t_1, \ldots, t_l)$} is:
	\begin{itemize} 
		\item {\small $\top$} if $0 = k = l$.  
		\item {\small $\bot$} if ($0 = k \not= l$) OR ($1 \leq k$ AND $l < k$).   
		\item {\small $p_{\mathsf{D}}(t_1, \ldots, t_l) 
			\wedge \forall z_1.\forall z_2.(p_{\mathsf{D}}(z_1,z_2) \wedge 
			(\bigvee_{i \leq k} z_1 = t_i) \wedge 
			\exists z_3.p_{\mathsf{AnnoEq}}(z_1,z_2,z_3) \supset (\bigvee_{i \leq l} z_2 = t_i)) \wedge \forall z_1.(p_{\mathsf{D}}(z_1) \wedge (\bigvee_{k < i \leq l} z_1 = t_i) \supset 
			\exists z_2.\exists z_3.((\bigvee_{i \leq k} z_2 = t_i) \wedge p_{\mathsf{AnnoEq}}(z_1, z_2, z_3))),$ } 
			otherwise.  \hfill$\spadesuit$ 
			\end{itemize} 
\end{definition}  
$p_{\mathsf{AnnoEq}}$ is for whether two distinct nodes share the same ID. So, 
$kl$-$\mathsf{CL}(t_1, \ldots, t_l)$ is for whether $l$ nodes form the closure of $k$ 
nodes under node equivalence. 

\begin{lemma}[Characterisability of closure by node 
	equivalence] \label{lem_kl_cl} 
    Given {\small $\mathcal{M}_{\sim} \equiv 
	((ObjStmts,ObjE), Obj\Pi)$},  
	let $k$ be no greater than $l$, and 
 	let $\evalit$ be such that: $\evalit(p_{\mathsf{D}})$ 
	is
	\resizebox{3cm}{!}{$[\star 1:\{\} \ \star 2:\{\} \ \cdots \  \star n:\{\}] $} 
	with $n$ being the arity of $p_{\mathsf{D}}$;   and 
	$\evalit(p_{\mathsf{AnnoEq}})$ is 
	\resizebox{3cm}{!}{$[\star 1:\{\star 3\} \ \star 2:\{\star 3\}]$}.  
	Then $\{\evalit(c_1), \ldots,\evalit(c_l)\}$ is  
	a subset\linebreak $ObjStmts'$ of $ObjStmts$ 
	with $ObjStmts' = \text{cl}_{\sim}(\{\evalit(c_1), \ldots, \evalit(c_k)\})$
	iff$^*$ $\mathcal{M}_{\sim},\linebreak \evalit 
	\models kl\text{-}\mathsf{Cl}(c_1, \ldots, c_l)$. 
\end{lemma} 
\begin{proof} 
   Let $\overrightarrow{c}$ denote 
	$c_1, \ldots, c_l$.  
	
	Suppose $0 < k$. 
	First, $\{\evalit(c_1), \ldots, \evalit(c_l)\}$ 
	is trivially not a subset of $ObjStmts$ in $\mathcal{M}_{\sim}$ 
	when either of the following conditions holds. 
	\begin{itemize} 
		\item There is a pair of natural numbers $i$ and $j$ 
			such that $1 \leq i < j \leq l$ and that 
			$\evalit(c_i) = \evalit(c_j)$. %In this case, 
		%$\{\evalit(c_1), \ldots, \evalit(c_k)\}$ 
		%	is not a set. 
		\item There is a natural number $i$ such that 
			$1 \leq i \leq l$ and that $\evalit(c_i)$ 
			is not a member of $ObjStmts$.  
		%	$\{\evalit(c_1), \ldots, \evalit(c_k)\}$ 
		%	is not a subset of $ObjStmts$. 
	\end{itemize}  
	In these cases, $\mathcal{M}_{\sim}, \evalit \not\models 
	p_{\mathsf{D}}(\overrightarrow{c})$, and so 
	$\mathcal{M}_{\sim}, \evalit \not\models  
	kl\text{-}\mathsf{CL}(\overrightarrow{c})$. % \vee 
	%kN\text{-}\mathsf{WCF}(\overrightarrow{c})$. 
	
	Now, let $\{\evalit(c_1), \ldots, \evalit(c_l)\}$ ($0 < k \leq l$) be a subset 
	of $ObjStmts$, which we denote by $ObjStmts'$, with $|ObjStmts'| = l$. 
	We prove that $ObjStmts' = \text{cl}_{\sim}
	(\{\evalit(c_1), \ldots, \evalit(c_k)\})$ 
	iff$^*$ $\mathcal{M}_{\sim}, \evalit \models    
	p_{\mathsf{D}}(\overrightarrow{c}) 
			\wedge \forall z_1.\forall z_2.(p_{\mathsf{D}}(z_1,z_2) \wedge 
			(\bigvee_{i \leq k} z_1 = c_i) \wedge 
			\exists z_3.p_{\mathsf{AnnoEq}}(z_1,z_2,z_3) \supset 
			(\bigvee_{i \leq l} z_2 = c_i)) \wedge \forall z_1.(p_{\mathsf{D}}(z_1) \wedge (\bigvee_{k < i \leq l} z_1 = c_i) \supset 
			\exists z_2.\exists z_3.((\bigvee_{i \leq k} z_2 = c_i) \wedge p_{\mathsf{AnnoEq}}(z_1, z_2, z_3)))$. 
        
	\textbf{Only if:} 
        $\mathcal{M}_{\sim}, \evalit \models p_{\mathsf{D}}(\overrightarrow{c})$ 
       holds by assumption. To prove that 
	$\mathcal{M}_{\sim}, \evalit \models    
			\forall z_1.\forall z_2.(p_{\mathsf{D}}(z_1,z_2) \wedge 
			(\bigvee_{i \leq k} z_1 = c_i) \wedge 
			\exists z_3.p_{\mathsf{AnnoEq}}(z_1,z_2,z_3) \supset 
			(\bigvee_{i \leq l} z_2 = c_i))$, 
	let $\evalit'$ be almost exactly $\evalit$ except that $\evalit'(z_1)$ and/or $\evalit'(z_2)$ 
	may be different from $\evalit(z_1)$ and/or $\evalit(z_2)$.  
       \begin{description} 
	       \item[\textbf{Case $\evalit'(z_1) \not\in ObjStmts$ OR $\evalit'(z_2) \not\in ObjStmts$}:]  
		       $\mathcal{M}_{\sim}, \evalit' \not\models p_{\mathsf{D}}(z_1, 
		       z_2)$. 
	       \item[\textbf{Case $\evalit'(z_1) = \evalit'(z_2)$}:] 
		       $\mathcal{M}_{\sim}, \evalit' \not\models p_{\mathsf{D}}(z_1, 
		       z_2)$. 
		\item[\textbf{Case $\evalit'(z_1), \evalit'(z_2) \in ObjStmts$ 
			AND $\evalit'(z_1) \not= \evalit'(z_2)$}:]{\ }  
			$\mathcal{M}, \evalit' \models p_{\mathsf{D}}(z_1, z_2)$ 
			holds. 
			\begin{description} 
				\item[\textbf{Sub-case $\evalit'(z_2) 
					\in ObjStmts'$}:] 
					$\mathcal{M}_{\sim}, \evalit' \models 
					(\bigvee_{i \leq l} z_2 = c_i)$. 
				\item[\textbf{Sub-case $\evalit'(z_2) \not\in 
					ObjStmts'$}:]   
					$\mathcal{M}_{\sim}, \evalit' \not\models 
					(\bigvee_{i \leq l} z_2 = c_i)$. 
					We prove 
					that $\mathcal{M}_{\sim}, \evalit' 
					\not\models 
			(\bigvee_{i \leq k} z_1 = c_i) \wedge 
			\exists z_3.p_{\mathsf{AnnoEq}}(z_1,z_2,z_3)$. 
    Suppose there is some $\evalit''$ which is almost exactly $\evalit'$ 
					except that $\evalit''(z_3)$ 
					is such that 
					$\mathcal{M}, \evalit'' \models 
					p_{\mathsf{AnnoEq}}(z_1, z_2, z_3)$.  
				Then, we must prove that 
					$\mathcal{M}, \evalit'' \not\models 
					\bigvee_{i \leq k} z_1 = c_i$. 
			  Suppose---by showing contradiction---
			   $\evalit''(z_1) \in \{\evalit''(c_1), \ldots, 
					\evalit''(c_k)\}$. 
					Then, $\evalit''(z_2) \in \text{cl}_{\sim}(\evalit''(c_1), \ldots, 
					\evalit''(c_k)\})$. 
				  However, by assumption,
					$\text{cl}_{\sim}(\{\evalit''(c_1), \ldots,\linebreak \evalit''(c_k)\}) = ObjStmts'$, contradiction. 
			\end{description} 
       \end{description} 
	Hence, $\mathcal{M}_{\sim}, \evalit \models    
			\forall z_1.\forall z_2.(p_{\mathsf{D}}(z_1,z_2) \wedge 
			(\bigvee_{i \leq k} z_1 = c_i) \wedge 
			\exists z_3.p_{\mathsf{AnnoEq}}(z_1,z_2,z_3) \supset 
			(\bigvee_{i \leq l} z_2 = c_i))$. 
	
	To prove that $\mathcal{M}_{\sim}, \evalit \models 
	\forall z_1.(p_{\mathsf{D}}(z_1) \wedge (\bigvee_{k < i \leq l} z_1 = c_i) \supset 
			\exists z_2.\exists z_3.((\bigvee_{i \leq k} z_2 = c_i) \wedge p_{\mathsf{AnnoEq}}(z_1, z_2, z_3)))$, let $\evalit'$ be almost exactly 
			$\evalit$ except that $\evalit'(z_1)$ and 
			$\evalit(z_1)$ may be different. 
    \begin{description} 
	    \item[\textbf{Case $\evalit'(z_1) \not\in ObjStmts$}:]  
		     $\mathcal{M}_{\sim}, \evalit' \not\models p_{\mathsf{D}}(z_1)$.
	     \item[\textbf{Case $\evalit'(z_1) \in \{\evalit'(c_1), \ldots, 
		     \evalit'(c_k)\}$}:]  
		      $\mathcal{M}_{\sim}, \evalit' \not\models \bigvee_{k < i 
		      \leq l} z_1 = c_i$. 
	      \item[\textbf{Case $\evalit'(z_1) \in ObjStmts' \backslash 
		      \{\evalit'(c_1), \ldots, \evalit'(c_k)\}$}:] 
		      We  must prove that 
		      $\mathcal{M}_{\sim}, \evalit' \models 
	\exists z_2.\exists z_3.((\bigvee_{i \leq k} z_2 = c_i) \wedge p_{\mathsf{AnnoEq}}(z_1, z_2, z_3))$. 
		    But by assumption, 
		    $ObjStmts' = \text{cl}_{\sim}(\{\evalit'(c_1), \ldots,
		    \evalit'(c_k)\})$. So, there is 
		    some node $u \in \{\evalit'(c_1), \ldots, \evalit'(c_k)\}$ 
		    such that 
		       $\evalit'(z_2) \in \text{cl}_{\sim}(\{u\})$.  
    \end{description} 
   Hence, $\mathcal{M}_{\sim}, \evalit \models 
	\forall z_1.(p_{\mathsf{D}}(z_1) \wedge (\bigvee_{k < i \leq l} z_1 = c_i) \supset 
			\exists z_2.\exists z_3.((\bigvee_{i \leq k} z_2 = c_i) \wedge p_{\mathsf{AnnoEq}}(z_1, z_2, z_3)))$.  \\ 

\textbf{If}: Let $\evalit'$ be such that 
$\mathcal{M}_{\sim}, \evalit' \models 
	kl\text{-}\mathsf{CL}(\overrightarrow{c})$. We have to 
	prove that, for every $\evalit$ which is almost exactly $\evalit'$ 
	except for the variable assignment, 
	$\{\evalit(c_1), \ldots, \evalit(c_l)\}$ ($0 < k \leq l$) 
	is a subset $ObjStmts'$ of $ObjStmts$ with  
	$ObjStmts' = \text{cl}_{\sim}(\{\evalit(c_1), \ldots, \evalit(c_k)\})$. 
	But $kl\text{-}\mathsf{CL}(\overrightarrow{c})$ 
	is a well-formed formula, so the difference in variable assignment makes no 
	difference in satisfiability judgement. Hence, it suffices to prove that 
	$\{\evalit'(c_1), \ldots, \evalit'(c_l)\}$ is $ObjStmts'$, 
	and that $ObjStmts' = \text{cl}_{\sim}(\{\evalit'(c_1), \ldots, \evalit'(c_k)\})$. 

	First, $\mathcal{M}_{\sim}, \evalit' \models p_{\mathsf{D}}(\overrightarrow{c})$, so 
	$\{\evalit'(c_1), \ldots, \evalit'(c_l)\}$ is indeed a subset 
	of $ObjStmts$ 
	with size $l$. 
	Second, 
	$\mathcal{M}_{\sim}, \evalit \models    
			\forall z_1.\forall z_2.(p_{\mathsf{D}}(z_1,z_2) \wedge 
			(\bigvee_{i \leq k} z_1 = c_i) \wedge 
			\exists z_3.p_{\mathsf{AnnoEq}}(z_1,z_2,z_3) \supset 
			(\bigvee_{i \leq l} z_2 = c_i))$, 
	so, for any $u \in \{\evalit'(c_1), \ldots, \evalit'(c_k)\}$ 
	and any $u' \in ObjStmts$, 
	if $u' \in \text{cl}_{\sim}(\{u\})$, then 
	$u' \in ObjStmts'$. 
	We also have to show that $ObjStmts'$ does not include  
	any $u$ in $ObjStmts \backslash \text{cl}_{\sim}(\{\evalit'(c_1),\linebreak
	\ldots, \evalit'(c_k)\})$. 
	But $\mathcal{M}_{\sim}, \evalit \models 
	\forall z_1.(p_{\mathsf{D}}(z_1) \wedge (\bigvee_{k < i \leq l} z_1 = c_i) \supset 
			\exists z_2.\exists z_3.(\linebreak(\bigvee_{i \leq k} z_2 = c_i) \wedge p_{\mathsf{AnnoEq}}(z_1, z_2, z_3)))$, so 
		for any $u \in ObjStmts' \backslash 
		\{\evalit'(c_1), \ldots,\linebreak \evalit'(c_k)\}$, 
		there is some $u' \in \{\evalit'(c_1), \ldots, \evalit'(c_k)\}$
		 such that $u \in \text{cl}_{\sim}(u')$.  

		 Consequently, $ObjStmts' = \text{cl}_{\sim}(\{\evalit'(c_1), 
		 \ldots, \evalit'(c_k)\})$, as required. \\

	 Suppose $0=k$. We must prove that $\{ \} = \text{cl}_{\sim}(\{\})$ iff$^*$ 
	 $\mathcal{M}_{\sim}, \evalit \models 00\text{-}\mathsf{CL}$ 
	 iff$^*$ $\mathcal{M}_{\sim}, \evalit \models \top$ (Definition \ref{def_cl}). 

\textbf{Only if}:  We have to prove 
that, for some $\evalit'$ which is almost exactly $\evalit$ except 
for variable assignment, 
$\mathcal{M}_{\sim}, \evalit' \models \top$. But $\mathcal{M}_{\sim}, \evalit'' \models \top$ 
holds for any $\evalit''$, so there is nothing to show. 

\textbf{If}: We have to prove that $\{ \} = \text{cl}_{\sim}(\{\})$. 
But this follows from the definition of cl$_{\sim}$. \hfill$\Box$ 

\end{proof} 

\begin{definition}[kN-WCF]\label{def_wcf} \rm 
	\noindent {\small $kN\text{-}\mathsf{WCF}(t_1, t_2, \ldots, t_k)$} is: 
	\begin{itemize} 
		\item {\small $\top$} if $0 = k$.  
		\item {\small $k\text{-}\mathsf{CF}(t_1, \ldots, t_k) \wedge 
			(\bigwedge_{k < i \leq N} \forall y_{k+1}.\ldots.\forall y_i.(
			ki\text{-}\mathsf{CL}(t_1, \ldots, t_k, y_{k+1}, \ldots, y_{i}) 
			\supset i\text{-}\mathsf{CF}(t_1,\linebreak \ldots, t_k, y_{k+1}, \ldots, y_i)))$ 
			if $1 \leq k$.} 
	%	\item {\small $k\text{-}\mathsf{CF}(t_1, \ldots, t_k) \wedge 
	%		(\bigwedge_{k < i \leq N} \forall y_{k+1}.\ldots.\forall y_i.(
	%		ki\text{-}\mathsf{CL}(t_1, \ldots, t_k, y_{k+1}, \ldots, y_{i}) 
	%		\supset \forall y'_1.\forall y'_2.(\linebreak
	%		p_{\mathsf{D}}(y'_1) \wedge p_{\mathsf{D}}(y'_2) \wedge 
	%		(\bigvee_{j \leq k} y'_1 = t_j \vee \bigvee_{k < j \leq i} 
	%		y'_1 = y_j) \wedge 
	%		(\bigvee_{j \leq k} y'_2 = t_j \vee \bigvee_{k < j \leq i} 
	%		y'_2 = y_j) \supset  
	%		(\neg y'_1 = y'_2 \wedge \neg p_{\mathsf{A}}(y'_1, y'_2) 
	%		\wedge \neg p_{\mathsf{A}}(y'_2, y'_1)  
	%		\vee (y'_1 = y'_2 \wedge \neg p_{\mathsf{A}}(y'_1))
	%		))))$  
	%		if $1 \leq k$.}  \hfill$\spadesuit$ 
	\end{itemize} 
\end{definition} 
$kN\text{-}\mathsf{WCF}(t_1, t_2, \ldots, t_k)$ is for judging whether 
$k$ nodes are wide-conflict-free. The maximum number of graph nodes 
of an annotated graph takes the place of $N$. (Of course, at this stage, no annotated graph---the semantic information---is present, so this is just intuition for the following lemma.)  
\begin{lemma}[Characterisability of a wide-conflict-free set] \label{lem_kn_wcf} 
     Given {\small $\mathcal{M}_{\sim} \equiv 
	((ObjStmts,ObjE), Obj\Pi)$},   
	let $N$ be $|ObjStmts|$ and $k$ be $0 \leq k \leq N$. 
	Let $\evalit$ be such that: $\evalit(p_{\mathsf{A}})$ is 
	\resizebox{3cm}{!}{$[\star 1:\{\} \xrightarrow{\{attacks\}} \star 2:\{\}]$} if $p_{\mathsf{A}}$'s arity is 2 and   
\resizebox{3cm}{!}{$
[\!
\begin{tikzpicture}[baseline=(n.base)]
  \node (n) {$\star 1:\{\}$};
  \draw[->, looseness=8, out=10, in=-10]
    (n) to node[right, xshift=2pt] {\scriptsize$\{attacks\}$} (n);
\end{tikzpicture}
\!]
$}
	if it is 1; $\evalit(p_{\mathsf{D}})$ 
	is 
	\resizebox{3cm}{!}{$[\star 1:\{\} \ \star 2:\{\} \ \cdots \  \star n:\{\}] $} 
	with $n$ being the arity of $p_{\mathsf{D}}$; 
	and 
	$\evalit(p_{\mathsf{AnnoEq}})$ is 
	\resizebox{3cm}{!}{$[\star 1:\{\star 3\} \ \star 2:\{\star 3\}]$}.  
	Then $\{\evalit(c_1), \ldots,\evalit(c_k)\}$ is  
	wide-conflict-free 
	iff$^*$ $\mathcal{M}_{\sim}, \evalit 
	\models kN\text{-}\mathsf{WCF}(c_1, \ldots, c_k)$. 
\end{lemma} 
\begin{proof} 
Let $\overrightarrow{c}$ denote
	$c_1, \ldots, c_k$.  
	
	Suppose $0 < k$. 
	First, $\{\evalit(c_1), \ldots, \evalit(c_k)\}$ 
	is trivially not a wide-conflict-free set in $\mathcal{M}_{\sim}$ 
	when either of the following conditions holds. 
	\begin{itemize} 
		\item There is a pair of natural numbers $i$ and $j$ 
			such that $1 \leq i < j \leq k$ and that 
			$\evalit(c_i) = \evalit(c_j)$. In this case, 
			$\{\evalit(c_1), \ldots, \evalit(c_k)\}$ 
			is not a set. 
		\item There is a natural number $i$ such that 
			$1 \leq i \leq k$ and that $\evalit(c_i)$ 
			is not a member of $ObjStmts$.  
			$\{\evalit(c_1), \ldots, \evalit(c_k)\}$ 
			is not a subset of $ObjStmts$. 
	\end{itemize}  
	In these cases, $\mathcal{M}_{\sim}, \evalit \not\models  
	kN\text{-}\mathsf{WCF}(\overrightarrow{c})$. 
	
	Now, let $\{\evalit(c_1), \ldots, \evalit(c_k)\}$ ($0 < k$) be a subset 
	of $ObjStmts$, which we denote by $ObjStmts'$, with $|ObjStmts'| = k$. 
	We prove that $ObjStmts'$ is a wide conflict-free set  
	iff$^*$ $\mathcal{M}_{\sim}, \evalit \models    
	k\text{-}\mathsf{CF}(\overrightarrow{c}) \wedge (\bigwedge_{k < i \leq N} \forall y_{k+1}.\ldots.\forall y_i.(
			ki\text{-}\mathsf{CL}(\overrightarrow{c}, y_{k+1},\linebreak \ldots, y_{i}) 
			\supset i\text{-}\mathsf{CF}(\overrightarrow{c}, y_{k+1}, \ldots, y_i)))$. 
        
	\textbf{Only if:} Since $ObjStmts'$ is wide-conflict-free, it is also simple-conflict-free. 
	 By Lemma \ref{lem_k_cf}, it holds that 
	 $\mathcal{M}_{\sim}, \evalit \models k\text{-}\mathsf{CF}(\overrightarrow{c})$.  
	 By definition of cl$_{\sim}$, there is some necessarily unique $ObjStmts''\ (\subseteq ObjStmts)$ such that 
	 $ObjStmts'' = \text{cl}_{\sim}(ObjStmts')$. 
	 In case $ObjStmts'' = ObjStmts'$, 
	 $\bigwedge_{k < i \leq N} F = \top$ for any formula $F$, so there is nothing to prove further. 
	 Otherwise, we must prove that 
	$\mathcal{M}_{\sim}, \evalit \models \bigwedge_{k < i \leq N} \forall y_{k+1}.\ldots.\forall y_i.(
			ki\text{-}\mathsf{CL}(\overrightarrow{c}, y_{k+1}, \ldots, y_{i}) 
			\supset i\text{-}\mathsf{CF}(\overrightarrow{c}, y_{k+1}, \ldots,\linebreak y_i))$. 
	Let $\evalit'$ be almost exactly $\evalit$ except that, for any $k < i' \leq i$, $\evalit'(y_{i'})$ 
	may be different from $\evalit(y_{i'})$. 
   
	\begin{description} 
		\item[\textbf{Case some $\evalit'(y_{i'})$ is not in $ObjStmts$}:] 
			$\mathcal{M}_{\sim}, \evalit' \not\models ki\text{-}\mathsf{CL}(\overrightarrow{c}, 
			y_{k+1}, \ldots, y_i)$. 
		\item[\textbf{Case $\evalit'(y_{i'}) \in ObjStmts$ for every $k < i' \leq i$}:] {\ } 
				\begin{description} 
				      \item[\textbf{Sub-case $\evalit'(t_u) = \evalit'(t_v)$ for $t_u, t_v \in 
			\{c_1, \ldots, y_i\}$ with $t_u \not= t_v$}:] $\mathcal{M}_{\sim}, \evalit' \not\models ki\text{-}\mathsf{CL}(\overrightarrow{c}, 
			y_{k+1}, \ldots, y_i)$.   
		\item[\textbf{Sub-case, otherwise}:]  
			If $\{\evalit'(c_1), \ldots, \evalit'(y_i)\} \not= \text{cl}_{\sim}(\{\evalit'(c_1), \ldots, 
						\evalit'(c_k)\})$, then by Lemma \ref{lem_kl_cl} 
						it holds that 
						$\mathcal{M}_{\sim}, \evalit' \not\models ki\text{-}\mathsf{CL}(\overrightarrow{c}, y_{k+1}, \ldots, y_i)$. Otherwise, $\mathcal{M}_{\sim}, \evalit' \models ki\text{-}\mathsf{CL}(\overrightarrow{c}, 
						y_{k+1}, \ldots, y_i)$, but 
						by assumption, cl$_{\sim}(\{\evalit'(c_1), \ldots, \evalit'(c_k)\})$ is 
						simple-conflict-free, so 
						$\mathcal{M}_{\sim}, \evalit' \models i\text{-}\mathsf{CF}(\overrightarrow{c}, 
						y_{k+1}, \ldots, y_i)$. 
				\end{description}
	\end{description} 
	Hence, $\mathcal{M}_{\sim}, \evalit \models  \bigwedge_{k < i \leq N} \forall y_{k+1}.\ldots.\forall y_i.(
			ki\text{-}\mathsf{CL}(\overrightarrow{c}, y_{k+1}, \ldots, y_{i}) 
			\supset i\text{-}\mathsf{CF}(\overrightarrow{c}, y_{k+1},\linebreak \ldots, y_i))$. 
			Consequently, 
			$\mathcal{M}_{\sim}, \evalit \models  kN\text{-}\mathsf{WCF}(\overrightarrow{c})$, as required. 
			\\

\textbf{If}: Let $\evalit'$ be such that 
$\mathcal{M}_{\sim}, \evalit' \models kN\text{-}\mathsf{WCF}(\overrightarrow{c})$.  
	We have to 
	prove that, for every $\evalit$ which is almost exactly $\evalit'$ 
	except for the variable assignment, 
	$\{\evalit(c_1), \ldots, \evalit(c_k)\}$ ($0 < k$) 
	is a wide-conflict-free set. 
	But $kN\text{-}\mathsf{WCF}(\overrightarrow{c})$ 
	is a well-formed formula, so the difference in variable assignment makes no 
	difference in satisfiability judgement. Hence, it suffices to prove that 
	$\{\evalit'(c_1), \ldots, \evalit'(c_k)\}$ is a wide-conflict-free set. 

	First, $\mathcal{M}_{\sim}, \evalit' \models kN\text{-}\mathsf{WCF}(\overrightarrow{c})$ 
	and therefore $\mathcal{M}_{\sim}, \evalit' \models 
	 p_{\mathsf{D}}(\overrightarrow{c})$ holds. So, 
	 $\{\evalit'(c_1), \ldots, \evalit'(c_k)\}$ is a subset of $ObjStmts$ 
	 with size $k$. %as required. 

	 Second, $\mathcal{M}_{\sim}, \evalit' \models k\text{-}\mathsf{CF}(\overrightarrow{c})$ holds. 
	 Suppose $k = N$, then, by Lemma \ref{lem_k_cf}, $\{\evalit'(c_1), \ldots, \evalit'(c_k)\}$ is 
	 simple-conflict-free. There is no more arguments in $ObjStmts$, so 
	 it is 
	  trivially wide-conflict-free. 
	 
	 Suppose $k < N$. We have to show that, for any $ObjStmts''$ with $ObjStmts' \subset 
	 ObjStmts'' \subseteq ObjStmts$, if $ObjStmts'' = \text{cl}_{\sim}(ObjStmts')$, 
	 then $ObjStmts''$ is simple-conflict-free. 
	 However, by assumption, $\mathcal{M}_{\sim}, \evalit' \models 
	 \bigwedge_{k < i \leq N} \forall y_{k+1}.\ldots y_i.(\linebreak ki\text{-}\mathsf{CL}(\overrightarrow{c},
	 y_{k+1}, \ldots, y_i) \supset i\text{-}\mathsf{CF}(\overrightarrow{c}, y_{k+1}, \ldots, y_i))$. 
	 Hence, for any $\evalit''$ which is almost exactly $\evalit'$ except that, 
	 for any $k+1 \leq i' \leq i$, $\evalit''(i')$ may not be $\evalit'(i')$, 
	 if $\{\evalit''(c_1), \ldots, \evalit''(c_k), \evalit''(y_{k+1}), \ldots, \evalit''(y_{i'})\} 
	 = \text{cl}_{\sim}(ObjStmts')$, then it is simple-conflict-free, as required. \\ 

	 Similarly if $k= 0$.  \hfill$\Box$ 
\end{proof} 

\begin{definition}[k-DF]\label{def_df} \rm 
	\noindent {\small  $k\text{-}\mathsf{DF}(t, t_1, t_2, \ldots, t_k)$} is: 
	     \begin{itemize}  
		     \item  {\small $p_{\mathsf{D}}(t) \wedge 
			     \forall y.(p_{\mathsf{D}}(y) \supset 
			     (\neg y = t \wedge \neg p_{\mathsf{A}}(y,t)) \vee (y = t \wedge 
			     \neg p_{\mathsf{A}}(y)))$} if $0 = k$. 
		     \item {\small $p_{\mathsf{D}}(t) \wedge 
			     p_{\mathsf{D}}(t_1, t_2, \ldots, t_k) 
			     \wedge \forall y.\exists x'.(p_{\mathsf{D}}(y) \wedge   
			     ((\neg y = t \wedge p_{\mathsf{A}}(y,t)) \vee  
			     (y = t \wedge p_{\mathsf{A}}(y)))
			     \supset p_{\mathsf{D}}(x') \wedge (\bigvee_{i\leq k} x' = t_i)
			     \wedge ((\neg x' = y \wedge p_{\mathsf{A}}(x', y)) \vee (x' = y \wedge 
			     p_{\mathsf{A}}(x'))))$} if $1 \leq k$.  \hfill$\spadesuit$ 
	     \end{itemize} 
\end{definition} 
$k\text{-}\mathsf{DF}(t, t_1, t_2, \ldots, t_k)$ is for judging whether $k$ nodes simple-defend a node.  
\begin{lemma}[Characterisability of simple-defence of a node]\label{lem_k_df} 
	Given {\small $\mathcal{M}_{\sim} \equiv %\linebreak
	((ObjStmts,ObjE), Obj\Pi)$}, 
 	let $\evalit$ be such that: $\evalit(p_{\mathsf{A}})$ is 
	\resizebox{3cm}{!}{$[\star 1:\{\} \xrightarrow{\{attacks\}} \star 2:\{\}]$} if $p_{\mathsf{A}}$'s arity is 2 and   
\resizebox{3cm}{!}{$
[\! 
\begin{tikzpicture}[baseline=(n.base)]
  \node (n) {$\star 1:\{\}$};
  \draw[->, looseness=8, out=10, in=-10]
    (n) to node[right, xshift=2pt] {\scriptsize$\{attacks\}$} (n);
\end{tikzpicture}
\!]
$}
	if it is 1; and $\evalit(p_{\mathsf{D}})$ 
	is 
	\resizebox{3cm}{!}{$[\star 1:\{\} \ \star 2:\{\} \ \cdots \  \star n:\{\}] $} 
	with $n$ being the arity of $p_{\mathsf{D}}$. %  and 
%	$\evalit(p_{\mathsf{AnnoEq}})$ is 
%	\resizebox{3cm}{!}{$[\star 1:\{\star 3\} \ \star 2:\{\star 3\}]$}.  
%	Let $N$ be $|ObjStmts|$ and $k$ be $0 \leq k \leq N$.  
	Let $k$ be $0 \leq k \leq |ObjStmts|$. 
	Then $\{\evalit(c_1), \ldots,\linebreak \evalit(c_k)\}$ 
	simple-defends a node $\evalit(c)$ 
	iff$^*$ $\mathcal{M}_{\sim}, \evalit 
	\models k\text{-}\mathsf{DF}(c, c_1, \ldots, c_k)$. 
\end{lemma}  
\begin{proof} 
   Let $\overrightarrow{c}$ denote $c_1, \ldots, c_k$.  

	Suppose $0 < k$. First, $\{\evalit(c_1), \ldots, \evalit(c_k)\}$ is 
	not a set of nodes when either of the following conditions holds. 

	\begin{itemize} 
		\item There is a pair of natural numbers $i$ and $j$ such that $1 \leq i < j \leq k$ 
			and that $\evalit(c_i) = \evalit(c_j)$. In this case, 
			$\{\evalit(c_1), \ldots, \evalit(c_k)\}$ is not a set. 
		\item There is a natural number $i$ such that $1 \leq i \leq k$ and that 
			$\evalit(c_i)$ is not a member of $ObjStmts$. $\{\evalit(c_1), \ldots, \evalit(c_k)\}$ 
			is not a subset of $ObjStmts$.  
	\end{itemize} 
	In these cases, $\mathcal{M}_{\sim}, \evalit \not\models k\text{-}\mathsf{DF}(c, \overrightarrow{c})$. 
	Also, if $\evalit(c) \not\in ObjStmts$, $\mathcal{M}_{\sim}, \evalit \not\models k\text{-}\mathsf{DF}(c, 
	\overrightarrow{c})$.  

	Now, let $\evalit(c)$ be a member of $ObjStmts$. 
	Let $\{\evalit(c_1), \ldots, \evalit(c_k)\}$ ($0 < k$) be a subset of $ObjStmts$, 
	which we denote by $ObjStmts'$, with $|ObjStmts'| = k$. We prove that 
	$ObjStmts'$ simple-defends $\evalit(c)$ iff$^*$ $\mathcal{M}_{\sim}, \evalit \models 
	p_{\mathsf{D}}(c) \wedge 
			     p_{\mathsf{D}}(\overrightarrow{c}) 
			     \wedge \forall y.\exists x'.(p_{\mathsf{D}}(y) \wedge (  
			     (\neg y = c \wedge p_{\mathsf{A}}(y,c)) \vee  
			     (y = c \wedge p_{\mathsf{A}}(y)))
			     \supset p_{\mathsf{D}}(x') \wedge (\bigvee_{i\leq k} x' = c_i)
			     \wedge ((\neg x' = y \wedge p_{\mathsf{A}}(x', y)) \vee (x' = y \wedge 
			     p_{\mathsf{A}}(x'))))$.  

			     \textbf{Only if}: 
	By assumption,  $\mathcal{M}_{\sim}, \evalit \models 
	p_{\mathsf{D}}(c) \wedge 
			     p_{\mathsf{D}}(\overrightarrow{c})$ holds.  
		We must prove that $\mathcal{M}_{\sim}, \evalit \models 
			     \forall y.\exists x'.(p_{\mathsf{D}}(y) \wedge (  
			     (\neg y = c \wedge p_{\mathsf{A}}(y,c)) \vee  
			     (y = c \wedge p_{\mathsf{A}}(y)))
			     \supset p_{\mathsf{D}}(x') \wedge (\bigvee_{i\leq k} x' = c_i)
			     \wedge ((\neg x' = y \wedge p_{\mathsf{A}}(x', y)) \vee (x' = y \wedge 
			     p_{\mathsf{A}}(x'))))$.  
	       Let $\evalit'$ be almost exactly $\evalit$ except that 
	       $\evalit'(y)$ may not be $\evalit(y)$. 
	       \begin{description} 
		       \item[\textbf{Case $\evalit'(y) \not\in ObjStmts$}:]  
			       $\mathcal{M}_{\sim}, \evalit' \not\models   
			       p_{\mathsf{D}}(y) \wedge (  
			     (\neg y = c \wedge p_{\mathsf{A}}(y,c)) \vee  
			       (y = c \wedge p_{\mathsf{A}}(y)))$.  
		       \item[\textbf{Case $\evalit'(y) \in ObjStmts$}:]{\ } 
			       \begin{description}  
				       \item[\textbf{Sub-case no $u \in ObjStmts'$ is such that $(\evalit'(y), c) \in ObjE$}:] 
					       $\mathcal{M}_{\sim}, \evalit' \not\models   
			       p_{\mathsf{D}}(y) \wedge (  
			     (\neg y = c \wedge p_{\mathsf{A}}(y,c)) \vee  
			       (y = c \wedge p_{\mathsf{A}}(y)))$.  
				       \item[\textbf{Sub-case, otherwise}:]   
						$\mathcal{M}_{\sim}, \evalit' \models   
			       p_{\mathsf{D}}(y) \wedge (  
			     (\neg y = c \wedge p_{\mathsf{A}}(y,c)) \vee  
			       (y = c \wedge p_{\mathsf{A}}(y)))$. We must prove 
					       that $\mathcal{M}_{\sim}, \evalit' \models \exists x'.(
					       p_{\mathsf{D}}(x') \wedge (\bigvee_{i\leq k} x' = c_i)
			     \wedge ((\neg x' = y \wedge p_{\mathsf{A}}(x', y)) \vee (x' = y \wedge 
					       p_{\mathsf{A}}(x'))))$. 
					       By assumption, $ObjStmts'$ simple-defends $\evalit'(c)$. 
					       Therefore, there is some $u \in ObjStmts'$ such that 
				               $(u, \evalit'(y)) \in ObjE$. 
					       Let $\evalit''$ be almost exactly $\evalit'$ except that 
					       $\evalit''(x')$ is the member $u$ of $ObjStmts'$. 
					       Then, 
					       $\mathcal{M}_{\sim}, \evalit'' \models 
						p_{\mathsf{D}}(x') \wedge 
						(\bigvee_{i \leq k} x' = c_i) \wedge 
						((\neg x' = y \wedge p_{\mathsf{A}}(x',y)) \vee 
						(x' = y \wedge p_{\mathsf{A}}x'))$. 
			       \end{description} 
	       \end{description} 
	       Consequently, $\mathcal{M}_{\sim}, \evalit \models \forall y.\exists x'.(p_{\mathsf{D}}(y) \wedge (  
			     (\neg y = c \wedge p_{\mathsf{A}}(y,c)) \vee  
			     (y = c \wedge p_{\mathsf{A}}(y)))
			     \supset p_{\mathsf{D}}(x') \wedge (\bigvee_{i\leq k} x' = c_i)
			     \wedge ((\neg x' = y \wedge p_{\mathsf{A}}(x', y)) \vee (x' = y \wedge 
			     p_{\mathsf{A}}(x'))))$, as required. \\ 

	       \textbf{If}: Let $\evalit'$ be such that 
		 $\mathcal{M}_{\sim}, \evalit' \models k\text{-}\mathsf{DF}(c, \overrightarrow{c})$.  
		We have to 
		prove that, for every $\evalit$ which is almost exactly $\evalit'$ 
		except for the variable assignment, 
		$\{\evalit(c_1), \ldots, \evalit(c_k)\}$ ($0 < k$) 
		simple-defends $\evalit(c)$. %is a wide-conflict-free set. 
		But $k\text{-}\mathsf{DF}(c, \overrightarrow{c})$ 
		is a well-formed formula, so the difference in variable assignment makes no 
		difference in satisfiability judgement. Hence, it suffices to prove that 
		$\{\evalit'(c_1), \ldots, \evalit'(c_k)\}$ simple-defends $\evalit'(c)$. %is a wide-conflict-free set. 
		
		First, $\mathcal{M}_{\sim}, \evalit' \models p_{\mathsf{D}}(c) \wedge p_{\mathsf{D}}(\overrightarrow{c})$. 
		So, $\{\evalit'(c_1), \ldots, \evalit'(c_k)\}$ is a subset $ObjStmts'$ of $ObjStmts$, 
		and $\evalit'(c)$ is a member of $ObjStmts$. 

		Second, $\mathcal{M}_{\sim}, \evalit' \models \forall y.\exists x'.(p_{\mathsf{D}}(y) \wedge (  
			     (\neg y = c \wedge p_{\mathsf{A}}(y,c)) \vee  
			     (y = c \wedge p_{\mathsf{A}}(y)))
			     \supset p_{\mathsf{D}}(x') \wedge (\bigvee_{i\leq k} x' = c_i)
			     \wedge ((\neg x' = y \wedge p_{\mathsf{A}}(x', y)) \vee (x' = y \wedge 
			     p_{\mathsf{A}}(x'))))$. Hence, for any $u \in ObjStmts$, 
			     if $(u, \evalit'(c)) \in ObjE$, then 
			     there is some $u' \in ObjStmts'$ such that 
			     $(u', u) \in ObjE$, as required. \\ 

	        Suppose $0 = k$. We prove that  
		$\{ \}$ simple-defends a node $\evalit(c)$ iff$^*$ 
		$\mathcal{M}_{\sim}, \evalit \models 
		p_{\mathsf{D}}(c) \wedge 
			     \forall y.(p_{\mathsf{D}}(y) \supset 
			     (\neg y = c \wedge \neg p_{\mathsf{A}}(y,c)) \vee (y = c \wedge 
			     \neg p_{\mathsf{A}}(y)))$. 

	       \textbf{Only if}: By assumption, $\mathcal{M}_{\sim}, \evalit \models p_{\mathsf{D}}(c)$. 
	       We must prove that $\mathcal{M}_{\sim}, \evalit \models \forall y.(p_{\mathsf{D}}(y) \supset 
	       (\neg y = c \wedge \neg p_{\mathsf{A}}(y, c)) \vee (y = c \wedge \neg p_{\mathsf{A}}(y)))$.
	       Let $\evalit'$ be almost exactly $\evalit$ except that $\evalit'(y)$ may not be $\evalit(y)$. 
	       \begin{description} 
		       \item[\textbf{Case $\evalit'(y) \not\in ObjStmts$}:] 
			       $\mathcal{M}_{\sim}, \evalit' \not\models p_{\mathsf{D}}(y)$.  
		       \item[\textbf{Case $\evalit'(y) \in ObjStmts$}:]   
				$\mathcal{M}_{\sim}, \evalit' \models p_{\mathsf{D}}(y)$. 
				We must prove that $\mathcal{M}_{\sim}, \evalit' \models 
				(\neg y = c \wedge \neg p_{\mathsf{A}}(y, c)) \vee (y = c \wedge \neg p_{\mathsf{A}}(y))$. 
				However, by assumption, there is no $u \in ObjStmts$ such that 
				$(u, \evalit(c)) \in ObjE$. 
	       \end{description} 
	       Consequently, $\mathcal{M}_{\sim}, \evalit \models \forall y.(p_{\mathsf{D}}(y) \supset 
	       (\neg y = c \wedge \neg p_{\mathsf{A}}(y, c)) \vee (y = c \wedge \neg p_{\mathsf{A}}(y)))$ holds, as required.\\

	       \textbf{If}: 
	       Let $\evalit'$ be such that 
		 $\mathcal{M}_{\sim}, \evalit' \models 0\text{-}\mathsf{DF}(c)$.  
	       We must prove that $\{ \}$ simple-defends $\evalit(c)$ for any $\evalit$ which 
	       is almost exactly $\evalit'$ except for variable assignment. But $0\text{-}\mathsf{DF}(c)$ is well-formed, 
	       so the difference in variable assignment makes no difference in satisfiability judgement. Hence, 
	       it suffices to prove that $\{ \}$ simple-defends $\evalit'(c)$. 
	       
	       First, $\mathcal{M}_{\sim}, \evalit' \models p_{\mathsf{D}}(c)$, so $\evalit'(c)$ is a member of $ObjStmts$. 
	       Second, $\mathcal{M}_{\sim}, \evalit' \models \forall y.(p_{\mathsf{D}}(y) \supset 
	       (\neg y = c \wedge \neg p_{\mathsf{A}}(y, c)) \vee (y = c \wedge \neg p_{\mathsf{A}}(y)))$, 
	       so if $u \in ObjStmts$, it cannot be that $(u, \evalit'(c)) \in ObjE$, as required. \hfill$\Box$  
\end{proof} 
\begin{definition}[kN-WDF]\label{def_wdf} \rm 
	\noindent {\small  $kN\text{-}\mathsf{WDF}(t, t_1, t_2, \ldots, t_k)$} is: 
	     \begin{itemize}  
		     \item  {\small $p_{\mathsf{D}}(t) \wedge \bigwedge_{1 \leq i \leq N} 
			     \forall y_{2}.\ldots.\forall y_i.(1i\text{-}\mathsf{CL}(t, y_2, \ldots, y_i) 
			     \supset \forall y'.(p_{\mathsf{D}}(y') \wedge (y' = t 
			     \vee (\bigvee_{2 \leq i' \leq i} y' = y_{i'})) \supset 
			     k\text{-}\mathsf{DF}(y', t_1, \ldots, t_k)))$} 
			     if $0 = k$. 
		     \item {\small $p_{\mathsf{D}}(t) \wedge p_{\mathsf{D}}(t_1, t_2, \ldots, t_k) 
			     \wedge 
				\bigwedge_{1 \leq i \leq N} 
			     \forall y_{2}.\ldots.\forall y_i.(1i\text{-}\mathsf{CL}(t, y_2, \ldots, y_i) 
			     \supset \forall y'.(p_{\mathsf{D}}(y') \wedge (y' = t 
			     \vee (\bigvee_{2 \leq i' \leq i} y' = y_{i'})) 
			     \supset k\text{-}\mathsf{DF}(y', 
			     t_1, \ldots, t_k)))$}
%			     \forall y''.\exists x'.((p_{\mathsf{D}}(y'') \supset (\neg y'' = t \wedge 
%			     p_{\mathsf{A}}(y'',t)) 
%			     \vee (y'' = t \wedge p_{\mathsf{A}}(y'')) 
%			     \supset p_{\mathsf{D}}(x') \wedge (\bigvee_{i \leq k} x' = t_i) \wedge 
%			     ((\neg x' = y'' \wedge p_{\mathsf{A}}(x',y'')) \vee (x' = y'' \wedge p_{\mathsf{A}}(x') )))))$} 
			     if $1 \leq k$.  \hfill$\spadesuit$ 
	     \end{itemize} 
\end{definition} 
$kN\text{-}\mathsf{WDF}(t, t_1, t_2, \ldots, t_k)$ is for whether $k$ nodes 
wide-defend a node.  

\begin{lemma}[Characterisability of wide-defence of a node] \label{lem_kn_wdf} 
   Given {\small $\mathcal{M}_{\sim} \equiv 
	((ObjStmts,ObjE), Obj\Pi)$},   
	let $N$ be $|ObjStmts|$ and $k$ be $0 \leq k \leq N$. 
	Let $\evalit$ be such that: $\evalit(p_{\mathsf{A}})$ is 
	\resizebox{3cm}{!}{$[\star 1:\{\} \xrightarrow{\{attacks\}} \star 2:\{\}]$} if $p_{\mathsf{A}}$'s arity is 2 and   
\resizebox{3cm}{!}{$
[\!
\begin{tikzpicture}[baseline=(n.base)]
  \node (n) {$\star 1:\{\}$};
  \draw[->, looseness=8, out=10, in=-10]
    (n) to node[right, xshift=2pt] {\scriptsize$\{attacks\}$} (n);
\end{tikzpicture}
\!]
$}
	if it is 1; $\evalit(p_{\mathsf{D}})$ 
	is 
	\resizebox{3cm}{!}{$[\star 1:\{\} \ \star 2:\{\} \ \cdots \  \star n:\{\}] $} 
	with $n$ being the arity of $p_{\mathsf{D}}$; 
	and 
	$\evalit(p_{\mathsf{AnnoEq}})$ is 
	\resizebox{3cm}{!}{$[\star 1:\{\star 3\} \ \star 2:\{\star 3\}]$}.  
	Then $\{\evalit(c_1), \ldots,\evalit(c_k)\}$ wide-defends 
	$\evalit(c)$ 
	iff$^*$ $\mathcal{M}_{\sim}, \evalit 
	\models kN\text{-}\mathsf{WDF}(c, c_1, \ldots, c_k)$. 
\end{lemma} 
\begin{proof} 
   Let  $\overrightarrow{c}$ denote $c_1, \ldots, c_k$.  

	Suppose $0 < k$. First, $\{\evalit(c_1), \ldots, \evalit(c_k)\}$ is 
	not a set of nodes when either of the following conditions holds. 

	\begin{itemize} 
		\item There is a pair of natural numbers $i$ and $j$ such that $1 \leq i < j \leq k$ 
			and that $\evalit(c_i) = \evalit(c_j)$. In this case, 
			$\{\evalit(c_1), \ldots, \evalit(c_k)\}$ is not a set. 
		\item There is a natural number $i$ such that $1 \leq i \leq k$ and that 
			$\evalit(c_i)$ is not a member of $ObjStmts$. $\{\evalit(c_1), \ldots, \evalit(c_k)\}$ 
			is not a subset of $ObjStmts$.  
	\end{itemize} 
	In these cases, $\mathcal{M}_{\sim}, \evalit \not\models kN\text{-}\mathsf{WDF}(c, \overrightarrow{c})$. 
	Also, if $\evalit(c) \not\in ObjStmts$, $\mathcal{M}_{\sim},\linebreak \evalit \not\models kN\text{-}\mathsf{WDF}(c, 
	\overrightarrow{c})$.  

	Now, let $\evalit(c)$ be a member of $ObjStmts$. 
	Let $\{\evalit(c_1), \ldots, \evalit(c_k)\}$ ($0 < k$) be a subset of $ObjStmts$, 
	which we denote by $ObjStmts'$, with $|ObjStmts'| = k$. We prove that 
	$ObjStmts'$ wide-defends $\evalit(c)$ iff$^*$ $\mathcal{M}_{\sim}, \evalit \models 
	p_{\mathsf{D}}(c) \wedge p_{\mathsf{D}}(\overrightarrow{c}) \wedge \bigwedge_{1 \leq i \leq N} 
			     \forall y_{2}.\ldots.\forall y_i.(1i\text{-}\mathsf{CL}(c, y_2, \ldots, y_i) 
			     \supset \forall y'.(p_{\mathsf{D}}(y') \wedge (y' = c 
			     \vee (\bigvee_{2 \leq i' \leq i} y' = y_{i'})) \supset 
			     k\text{-}\mathsf{DF}(y', \overrightarrow{c})))$. 

	\textbf{Only if}: By assumption,  $\mathcal{M}_{\sim}, \evalit \models 
			p_{\mathsf{D}}(c) \wedge p_{\mathsf{D}}(\overrightarrow{c})$ holds.   
			We must prove that, for each $1 \leq i \leq N$, 
			$\mathcal{M}_{\sim}, \evalit \models 
			\forall y_{2}.\ldots.\forall y_i.(1i\text{-}\mathsf{CL}(c, y_2, \ldots, y_i) 
			     \supset \forall y'.(\linebreak p_{\mathsf{D}}(y') \wedge (y' = c 
			     \vee (\bigvee_{2 \leq i' \leq i} y' = y_{i'})) \supset 
			     k\text{-}\mathsf{DF}(y', \overrightarrow{c})))$.

			     Suppose $1 < i \leq N$. Let $\evalit'$ be almost exactly $\evalit$ except that, 
			     for any $2 \leq i' \leq i$, $\evalit'(y_{i'})$ may be different from 
			     $\evalit(y_{i'})$. 

			     \begin{description} 
				     \item[\textbf{Case some $i'$ is such that $\evalit'(y_{i'}) \not\in ObjStmts$}:]  
					     By Lemma \ref{lem_kl_cl}, $\mathcal{M}_{\sim}, \evalit' \not\models 
			             1i\text{-}\mathsf{CL}(c, y_2, \ldots, y_i)$.   
			     \item[\textbf{Case  $\{\evalit'(c), \evalit'(y_2), \ldots, \evalit'(y_{i})\}$ is not a set}:] 
				     	 By Lemma \ref{lem_kl_cl}, $\mathcal{M}_{\sim}, \evalit' \not\models 
			             1i\text{-}\mathsf{CL}(c, y_2, \ldots, y_i)$.   
			     \item[\textbf{Case \text{cl}$_{\sim}(\{\evalit'(c)\}) \not= 
						     \{\evalit'(c), \evalit'(y_2), \ldots, \evalit'(y_{i})\}$}:]     
						     By Lemma \ref{lem_kl_cl}, $\mathcal{M}_{\sim}, \evalit' \not\models 
			             1i\text{-}\mathsf{CL}(c, y_2, \ldots, y_i)$.   
			     \item[\textbf{Case, otherwise}:]  
				     We prove that $\mathcal{M}_{\sim}, \evalit' \models 
						     \forall y'.(p_{\mathsf{D}}(y') \wedge (y' = c 
			     \vee (\bigvee_{2 \leq i' \leq i} y' = y_{i'})) \supset 
			     k\text{-}\mathsf{DF}(y', \overrightarrow{c}))$.  
						   Let $\evalit''$ be almost exactly $\evalit'$ except that 
						     $\evalit''(y')$ may be different from $\evalit'(y')$. 
						     \begin{description} 
							     \item[\textbf{Sub-case $\evalit''(y')\not\in ObjStmts$}:] 
								     $\mathcal{M}_{\sim}, \evalit'' \not\models  
								     p_{\mathsf{D}}(y') \wedge (y' = c 
								     \vee (\bigvee_{2 \leq i' \leq i} y' = y_{i'}))$.  
							     \item[\textbf{Sub-case $\evalit''(y') \not\in 
								     \{\evalit''(c), \evalit''(y_2), \ldots,
								     \evalit''(y_i)\}$}:] 
								     $\mathcal{M}_{\sim}, \evalit'' \not\models  
								     p_{\mathsf{D}}(y') \wedge (y' = c 
								     \vee (\bigvee_{2 \leq i' \leq i} y' = y_{i'}))$.   
							     \item[\textbf{Sub-case, otherwise}:] 
								     By assumption, for any $2 \leq i' \leq i$, 
								     $ObjStmts'$ simple-defends  
								     $\evalit''(y_{i'})$. It also simple-defends $\evalit''(c)$.
								     Hence, 
								     by Lemma \ref{lem_k_df}, 
								     $\mathcal{M}_{\sim}, \evalit'' \models 
								     k\text{-}\mathsf{DF}(y',\linebreak \overrightarrow{c})$. 
				     \end{description} 
			     \end{description} 
				Consequently, for each $1 < i \leq N$, 
			$\mathcal{M}_{\sim}, \evalit \models 
			\forall y_{2}.\ldots.\forall y_i.(1i\text{-}\mathsf{CL}(c, y_2, \ldots, y_i) 
			     \supset \forall y'.(p_{\mathsf{D}}(y') \wedge (y' = c 
			     \vee (\bigvee_{2 \leq i' \leq i} y' = y_{i'})) \supset 
			     k\text{-}\mathsf{DF}(y', \overrightarrow{c})))$, as required. 

				If $1 = i$, we must prove that 
			     $\mathcal{M}_{\sim}, \evalit \models 
			11\text{-}\mathsf{CL}(c) \supset \forall y'.(p_{\mathsf{D}}(y') \wedge (y' = c 
			     \vee \bot) \supset 
			     k\text{-}\mathsf{DF}(y', \overrightarrow{c}))$. The proof is similar. 
			     \\

			     \textbf{If}: 
			     Let $\evalit'$ be such that 
		 $\mathcal{M}_{\sim}, \evalit' \models kN\text{-}\mathsf{WDF}(c, \overrightarrow{c})$.  
		We have to 
		prove that, for every $\evalit$ which is almost exactly $\evalit'$ 
		except for the variable assignment, 
		$\{\evalit(c_1), \ldots, \evalit(c_k)\}$ ($0 < k$) 
		wide-defends $\evalit(c)$. %is a wide-conflict-free set. 
		But $kN\text{-}\mathsf{WDF}(c, \overrightarrow{c})$ 
		is a well-formed formula, so the difference in variable assignment makes no 
		difference in satisfiability judgement. Hence, it suffices to prove that 
		$\{\evalit'(c_1), \ldots, \evalit'(c_k)\}$ wide-defends $\evalit'(c)$. %is a wide-conflict-free set. 
		
		First, $\mathcal{M}_{\sim}, \evalit' \models p_{\mathsf{D}}(c) \wedge p_{\mathsf{D}}(\overrightarrow{c})$. 
		So, $\{\evalit'(c_1), \ldots, \evalit'(c_k)\}$ is a subset $ObjStmts'$ of $ObjStmts$, 
		and $\evalit'(c)$ is a member of $ObjStmts$. 

		Second, for each $1 \leq i \leq N$,  
		 $\mathcal{M}_{\sim}, \evalit' \models \forall y_{2}.\ldots.\forall y_i.(1i\text{-}\mathsf{CL}(c, y_2, \ldots, y_i) 
			     \supset \forall y'.(p_{\mathsf{D}}(y') \wedge (y' = c 
			     \vee (\bigvee_{2 \leq i' \leq i} y' = y_{i'})) \supset 
			     k\text{-}\mathsf{DF}(y', \overrightarrow{c})))$. 
			     Suppose $1 < i$. If there are $(i-1)$ members $u_2, \ldots, u_i$ of $ObjStmts$ 
			     such that cl$_{\sim}(\{\evalit'(c)\}) = \{\evalit'(c), u_2, \ldots, u_i\}$, 
			     then $ObjStmts'$ simple-defends each node in this set. 
			     Suppose $1 = i$. If cl$_{\sim}(\{\evalit'(c)\}) = \{\evalit'(c)\}$, then 
			     $ObjStmts'$ simple-defends $\evalit'(c)$, as required. \\

			     Suppose $0 = k$. We prove that  
		$\{ \}$ wide-defends a node $\evalit(c)$ iff$^*$ 
		$\mathcal{M}_{\sim}, \evalit \models 
		p_{\mathsf{D}}(c) \wedge 
			     \forall y.(p_{\mathsf{D}}(y) \supset 
			     (\neg y = c \wedge \neg p_{\mathsf{A}}(y,c)) \vee (y = c \wedge 
			     \neg p_{\mathsf{A}}(y)))$. This formula is different from the formula 
			     for $0 < k$ only by absence of $p_{\mathsf{D}}(\overrightarrow{c})$. Proof is similar. 
	        \hfill$\Box$  
\end{proof}

We use these formulas to write down several other formulas. 

\begin{definition}[k(N)-(W)ADM]\label{def_adm} \rm 
	\noindent {\small  $k\text{-}\mathsf{ADM}(t_1, \ldots, t_k)$ is:\\ $
	      k\text{-}\mathsf{CF}(t_1, \ldots, t_k) \wedge 
	      \bigwedge_{1 \leq j \leq k} k\text{-}\mathsf{DF}(t_j, t_1, \ldots, t_k)$}.  \\\\  
	      $kN\text{-}\mathsf{WADM}(t_1, \ldots, t_k)$ is: 
	      {\small $kN\text{-}\mathsf{WCF}(t_1, \ldots, t_k) \wedge 
	      \bigwedge_{1 \leq j \leq k} kN\text{-}\mathsf{WDF}(t_j, t_1, \ldots, t_k)$}.  
	      \hfill$\spadesuit$ 
\end{definition}    

\begin{proposition}[Characterisability of a simple-/wide-admissible set] \label{prop_admissible} 
   Given {\small $\mathcal{M}_{\sim} \equiv 
	((ObjStmts,ObjE), Obj\Pi)$},   
	let $N$ be $|ObjStmts|$ and $k$ be $0 \leq k \leq N$. 
	Let $\evalit$ be such that: $\evalit(p_{\mathsf{A}})$ is 
	\resizebox{3cm}{!}{$[\star 1:\{\} \xrightarrow{\{attacks\}} \star 2:\{\}]$} if $p_{\mathsf{A}}$'s arity is 2 and   
\resizebox{3cm}{!}{$
[\!
\begin{tikzpicture}[baseline=(n.base)]
  \node (n) {$\star 1:\{\}$};
  \draw[->, looseness=8, out=10, in=-10]
    (n) to node[right, xshift=2pt] {\scriptsize$\{attacks\}$} (n);
\end{tikzpicture}
\!]
$}
	if it is 1; $\evalit(p_{\mathsf{D}})$ 
	is 
	\resizebox{3cm}{!}{$[\star 1:\{\} \ \star 2:\{\} \ \cdots \  \star n:\{\}] $} 
	with $n$ being the arity of $p_{\mathsf{D}}$; 
	and 
	$\evalit(p_{\mathsf{AnnoEq}})$ is 
	\resizebox{3cm}{!}{$[\star 1:\{\star 3\} \ \star 2:\{\star 3\}]$}.  
	Then $\{\evalit(c_1), \ldots,\evalit(c_k)\}$ is 
	\begin{itemize} 
		\item a simple-admissible set iff$^*$ $\mathcal{M}_{\sim}, \evalit \models 
			k\text{-}\mathsf{CF}(c_1, \ldots, c_k) \wedge 
	      \bigwedge_{1 \leq j \leq k} k\text{-}\mathsf{DF}(c_j, c_1,\linebreak \ldots, c_k)$. 
      \item a wide-admissible set iff$^*$ $\mathcal{M}_{\sim}, \evalit \models kN\text{-}\mathsf{WCF}(c_1, \ldots, c_k) \wedge 
	      \bigwedge_{1 \leq j \leq k} kN\text{-}\mathsf{WDF}(\linebreak c_j, c_1, \ldots, c_k)$. 
	\end{itemize} 
\end{proposition} 
\begin{proof} 
	Follows immediately from Definition \ref{def_simple_wide_admissible} 
	and Lemmas \ref{lem_k_cf} through \ref{lem_kn_wdf}. \hfill$\Box$ 
\end{proof}

%To save space, in the following definitions, by writing like $(X)Y$ is $(X)Z \wedge (X)W$, 
%we mean both $Y = Z \wedge W$ and $XY = XZ \wedge XW$. 
\begin{definition}[k(N)-(W-)(D/E-)CMP]\label{def_cmp}\rm {\ }\\ 
	\noindent {\small  $k\text{-}\mathsf{D}\text{-}\mathsf{CMP}(t_1, \ldots, t_k)$ is: $  
	      k\text{-}\mathsf{ADM}(t_1, \ldots, t_k)
	      \wedge \forall x.(k\text{-}\mathsf{DF}(x, t_1, \ldots, t_k) \supset 
		       \bigvee_{l \leq k}x = t_l)$}.\\\\ 
	{\small  $kN\text{-}\mathsf{W}\text{-}\mathsf{D}\text{-}\mathsf{CMP}(t_1, \ldots, t_k)$ is: $  
	      kN\text{-}\mathsf{WADM}(t_1, \ldots, t_k)
	      \wedge \forall x.(kN\text{-}\mathsf{WDF}(x, t_1, \ldots, t_k) \supset\\ 
		       \bigvee_{l \leq k}x = t_l)$}.\\ 
			      
	\noindent	${\small  k\mathsf{\text{-}E\text{-}CMP}(t_1, \ldots, t_k)}$ is: 
		  {\small $k\text{-}\mathsf{ADM}(t_1, \ldots, t_k) \wedge   
		      kk\text{-}\mathsf{CL}(t_1, \ldots, t_k)$}. \hfill$\spadesuit$ 

\end{definition}  
$k\text{-}\mathsf{D}\text{-}\mathsf{CMP}(t_1, \ldots, t_k)$ is for whether 
$k$ nodes form a simple defence-complete extension, and $kN\text{-}\mathsf{W}\text{-}\mathsf{D}\text{-}\mathsf{CMP}(t_1, \ldots, t_k)$ for whether they form a wide defence-complete extension.   
$k\text{-}\mathsf{E}\text{-}\mathsf{CMP}(t_1, \ldots, t_k)$ is for whether they 
form a simple and wide equivalence-complete extension (Cf. Theorem \ref{thm_collapse}).  

\begin{proposition}[Characterisability of $\sigma\tau$-complete extensions]\label{prop_sigma_tau_complete_extensions} 
    Given {\small $\mathcal{M}_{\sim}\linebreak \equiv 
	((ObjStmts,ObjE), Obj\Pi)$},   
	let $N$ be $|ObjStmts|$ and $k$ be $0 \leq k \leq N$. 
	Let $\evalit$ be: $\evalit(p_{\mathsf{A}})$ is 
	\resizebox{3cm}{!}{$[\star 1:\{\} \xrightarrow{\{attacks\}} \star 2:\{\}]$} if $p_{\mathsf{A}}$'s arity is 2 and   
\resizebox{3cm}{!}{$
[\!
\begin{tikzpicture}[baseline=(n.base)]
  \node (n) {$\star 1:\{\}$};
  \draw[->, looseness=8, out=10, in=-10]
    (n) to node[right, xshift=2pt] {\scriptsize$\{attacks\}$} (n);
\end{tikzpicture}
\!]
$}
	if it is 1; $\evalit(p_{\mathsf{D}})$ 
	is 
	\resizebox{3cm}{!}{$[\star 1:\{\} \ \star 2:\{\} \ \cdots \  \star n:\{\}] $} 
	with $n$ being the arity of $p_{\mathsf{D}}$; 
	and 
	$\evalit(p_{\mathsf{AnnoEq}})$ is 
	\resizebox{3cm}{!}{$[\star 1:\{\star 3\} \ \star 2:\{\star 3\}]$}.  
	Then $\{\evalit(c_1), \ldots,\evalit(c_k)\}$ is 
	\begin{itemize} 
		\item a simple defence-complete extension iff$^*$ $\mathcal{M}_{\sim}, \evalit \models  
			k\text{-}\mathsf{D}\text{-}\mathsf{CMP}(c_1, \ldots, c_k)$. 
       		\item a wide defence-complete extension iff$^*$ $\mathcal{M}_{\sim}, \evalit \models  
			kN\text{-}\mathsf{W}\text{-}\mathsf{D}\text{-}\mathsf{CMP}(c_1, \ldots, c_k)$. 
      		\item a simple equivalence-complete extension iff$^*$   
			$\mathcal{M}_{\sim}, \evalit \models k\text{-}\mathsf{E}\text{-}\mathsf{CMP}(c_1, \ldots, c_k)$. 
	\end{itemize} 
\end{proposition} 
\begin{proof} 
	Let $\overrightarrow{c}$ denote $c_1, \ldots, c_k$.  

	We discharge the first obligation.

	\textbf{Only if}: By assumption, $\{\evalit(c_1), \ldots, \evalit(c_k)\}$ is a simple-admissible set. 
	We denote this set by $ObjStmts'$. By Proposition \ref{prop_admissible}, $\mathcal{M}_{\sim}, \evalit \models k\text{-}\mathsf{ADM}(\overrightarrow{c})$. 
	 
	 We prove that $\mathcal{M}_{\sim}, \evalit \models \forall x.(k\text{-}\mathsf{DF}(x, \overrightarrow{c}) \supset 
	 \bigvee_{l \leq k} x = c_l)$.  
	 Let $\evalit'$ be almost exactly $\evalit$ except that $\evalit'(x)$ may be different from $\evalit(x)$.  

	 \begin{description} 
		 \item[\textbf{Case $\evalit'(x) \not\in ObjStmts$}:] $\mathcal{M}_{\sim}, \evalit' \not\models 
			 k\text{-}\mathsf{DF}(x, \overrightarrow{c})$. 
		 \item[\textbf{Case $\evalit'(x) \in ObjStmts$}:] By Lemma \ref{lem_k_df}, 
			 $ObjStmts'$ simple-defends $\evalit'(x)$ iff$^*$ 
			 $\mathcal{M}_{\sim}, \evalit' \models k\text{-}\mathsf{DF}(x, \overrightarrow{c})$. 
			 Suppose $ObjStmts'$ simple-defends $\evalit'(x)$. 
			 By assumption, $\evalit'(x)$ is a member of $ObjStmts'$. So, 
			 $\mathcal{M}_{\sim}, \evalit' \models k\text{-}\mathsf{DF}(x, \overrightarrow{c}) \supset 
	 \bigvee_{l \leq k} x = c_l$.  
			 Suppose $ObjStmts'$ does not simple-defend $\evalit'(x)$. 
			 Then, $\mathcal{M}_{\sim}, \evalit' \not\models k\text{-}\mathsf{DF}(x, \overrightarrow{c})$. 
	 \end{description} 
	 Consequently, $\mathcal{M}_{\sim}, \evalit \models \forall x.(k\text{-}\mathsf{DF}(x, \overrightarrow{c}) \supset 
	 \bigvee_{l \leq k} x = c_l)$, as required. \\

	 \textbf{If}: We have been rigorous about the treatment of `iff$^*$'. To avoid excessive repetition, we omit the part of the proof. 
	 By assumption, $\mathcal{M}_{\sim}, \evalit \models k\text{-}\mathsf{ADM}(\overrightarrow{c})$. So, by Proposition 
	 \ref{prop_admissible}, $\{\evalit(c_1), \ldots, \evalit(c_k)\}$ is a simple-admissible set. We denote this set by $ObjStmts'$. 
	 By assumption, $\mathcal{M}_{\sim}, \evalit \models \forall x.(k\text{-}\mathsf{DF}(x, \overrightarrow{c}) \supset 
	 \bigvee_{l \leq k} x = c_l)$. So, for any $u \in ObjStmts$, if $ObjStmts'$ simple-defends $u$, then 
	 $u$ is in $ObjStmts'$, as required. \\ 

	 We discharge the second obligation.  

	 \textbf{Only if}: By assumption, $\{\evalit(c_1), \ldots, \evalit(c_k)\}$ is a wide-admissible set. 
	We denote this set by $ObjStmts'$. By Proposition \ref{prop_admissible}, $\mathcal{M}_{\sim}, \evalit \models 
	kN\text{-}\mathsf{WADM}(\overrightarrow{c})$. 
	 
	 We prove that $\mathcal{M}_{\sim}, \evalit \models \forall x.(kN\text{-}\mathsf{WDF}(x, \overrightarrow{c}) \supset 
	 \bigvee_{l \leq k} x = c_l)$.  
	 Let $\evalit'$ be almost exactly $\evalit$ except that $\evalit'(x)$ may be different from $\evalit(x)$.  

	 \begin{description} 
		 \item[\textbf{Case $\evalit'(x) \not\in ObjStmts$}:] $\mathcal{M}_{\sim}, \evalit' \not\models 
			 kN\text{-}\mathsf{WDF}(x, \overrightarrow{c})$. 
		 \item[\textbf{Case $\evalit'(x) \in ObjStmts$}:] By Lemma \ref{lem_kn_wdf}, 
			 $ObjStmts'$ wide-defends $\evalit'(x)$ iff$^*$ 
			 $\mathcal{M}_{\sim}, \evalit' \models kN\text{-}\mathsf{WDF}(x,\linebreak \overrightarrow{c})$. 
			 Suppose $ObjStmts'$ wide-defends $\evalit'(x)$. 
			 By assumption, $\evalit'(x)$ is a member of $ObjStmts'$. So, 
			 $\mathcal{M}_{\sim}, \evalit' \models kN\text{-}\mathsf{WDF}(x, \overrightarrow{c}) \supset 
	 \bigvee_{l \leq k} x = c_l$.  
			 Suppose $ObjStmts'$ does not wide-defend $\evalit'(x)$. 
			 Then, $\mathcal{M}_{\sim}, \evalit' \not\models kN\text{-}\mathsf{WDF}(x, \overrightarrow{c})$. 
	 \end{description} 
	 Consequently, $\mathcal{M}_{\sim}, \evalit \models \forall x.(kN\text{-}\mathsf{WDF}(x, \overrightarrow{c}) \supset 
	 \bigvee_{l \leq k} x = c_l)$, as required. \\

	 \textbf{If}: 
	 By assumption, $\mathcal{M}_{\sim}, \evalit \models kN\text{-}\mathsf{WADM}(\overrightarrow{c})$. So, by Proposition 
	 \ref{prop_admissible}, $\{\evalit(c_1),\linebreak \ldots, \evalit(c_k)\}$ is a wide-admissible set. 
	 We denote this set by $ObjStmts'$. 
	 By assumption, $\mathcal{M}_{\sim}, \evalit \models \forall x.(kN\text{-}\mathsf{WDF}(x, \overrightarrow{c}) \supset 
	 \bigvee_{l \leq k} x = c_l)$. So, for any $u \in ObjStmts$, if $ObjStmts'$ wide-defends $u$, then 
	 $u$ is in $ObjStmts'$, as required. \\

         We discharge the third obligation. 

	 \textbf{Only if}: By assumption, $\{\evalit(c_1), \ldots, \evalit(c_k)\}$ is a simple-admissible set.   
	 We denote this set by $ObjStmts'$. 
	 By Proposition \ref{prop_admissible}, $\mathcal{M}_{\sim}, \evalit \models k\text{-}\mathsf{ADM}(\overrightarrow{c})$.  
	 By assumption, $ObjStmts' = \text{cl}_{\sim}(ObjStmts')$. 
	 By Lemma \ref{lem_kl_cl}, 
	  $\mathcal{M}_{\sim}, \evalit \models kk\text{-}\mathsf{CL}(\overrightarrow{c})$, as required. \\

	  \textbf{If}: We omit the `iff$^*$' part of the proof. By assumption,  
	  $\mathcal{M}_{\sim}, \evalit \models k\text{-}\mathsf{ADM}(\overrightarrow{c})$. By Proposition 
	  \ref{prop_admissible}, $\{\evalit(c_1), \ldots, \evalit(c_k)\}$ is a simple-admissible set.   
	  We denote this set by $ObjStmts'$. 
	  By assumption, $\mathcal{M}_{\sim}, \evalit \models kk\text{-}\mathsf{CL}(\overrightarrow{c})$. 
	  By Lemma \ref{lem_kl_cl}, $ObjStmts' = \text{cl}_{\sim}(ObjStmts')$, as required.  
	  \hfill$\Box$ 
\end{proof}

$\mathcal{M}_{\sim}$'s extensions are 
first-order characterisable with these formulas. 
By Theorem \ref{thm_characterisation_wide_defence_grounded_extensions}, 
a simple equivalence-x extension also characterises a wide equivalence-x extension. 
By a simple observation, a simple/wide x extension 
is both a simple/wide defence-x extension and an equivalence-x extension, {\it i.e.}   
it is very easily derivable. We therefore do not explicitly include the characterisation 
of simple/wide x extensions.

\begin{theorem}[Characterisability of $\mathcal{M}_{\sim}$'s extensions]\label{prop_fol_characterisability_dung_acceptability_semantics}   
	Given {\small $\mathcal{M}_{\sim} \equiv \linebreak
	((ObjStmts,ObjE), Obj\Pi)$}, 
 	let $\evalit$ be such that: $\evalit(p_{\mathsf{A}})$ is \resizebox{3cm}{!}{$[\star 1:\{\} \xrightarrow{\{attacks\}} \star 2:\{\}]$} if $p_{\mathsf{A}}$'s arity is 2 and   
\resizebox{3cm}{!}{$
[\!
\begin{tikzpicture}[baseline=(n.base)]
  \node (n) {$\star 1:\{\}$};
  \draw[->, looseness=8, out=10, in=-10]
    (n) to node[right, xshift=2pt] {\scriptsize$\{attacks\}$} (n);
\end{tikzpicture}
\!]
$}
	if it is 1; $\evalit(p_{\mathsf{D}})$ 
	is 
	\resizebox{3cm}{!}{$[\star 1:\{\} \ \star 2:\{\} \ \cdots \  \star n:\{\}] $} 
	with $n$ being the arity of $p_{\mathsf{D}}$;  and 
	$\evalit(p_{\mathsf{AnnoEq}})$ is 
	\resizebox{3cm}{!}{$[\star 1:\{\star 3\} \ \star 2:\{\star 3\}]$}.  
	Let $N$ be $|ObjStmts|$ and $k$ be $0 \leq k \leq N$. 
	Then $\{\evalit(c_1), \ldots,\evalit(c_k)\}$ is 
	\begin{enumerate}  
			{\small 
		\item a simple defence-complete extension iff$^*$ 
			$\mathcal{M}_{\sim}, \evalit \models k\text{-}\mathsf{D}\text{-}\mathsf{CMP}(c_1, \ldots, c_k)$. 
		\item a wide defence-complete extension iff$^*$  
			$\mathcal{M}_{\sim}, \evalit \models kN\text{-}\mathsf{W}\text{-}\mathsf{D}
			\text{-}\mathsf{CMP}(c_1, 
			\ldots, c_k)$.  
		\item a simple equivalence-complete extension iff$^*$
			$\mathcal{M}_{\sim}, \evalit \models k\text{-}\mathsf{E}\text{-}\mathsf{CMP}(c_1, \ldots, c_k)$.  \\
		\item a simple defence-preferred extension iff$^*$
			$\mathcal{M}_{\sim}, \evalit \models k\text{-}\mathsf{D}\text{-}\mathsf{CMP}(c_1,\ldots,c_k) \\
			\wedge \bigwedge_{k+1 \leq m \leq N}\neg 
			(\exists x_m.\ldots.\exists x_{N}.(k+1+N-m)\text{-}\mathsf{D}\text{-}\mathsf{CMP}(c_1,\ldots,c_k, 
			x_{m},
			\ldots,x_{N}))$. 
		\item  a wide defence-preferred extension iff$^*$
			$\mathcal{M}_{\sim}, \evalit \models kN\text{-}\mathsf{W}\text{-}\mathsf{D}
			\text{-}\mathsf{CMP}(c_1,\ldots,c_k) \\
			\wedge \bigwedge_{k+1 \leq m \leq N}\neg 
			(\exists x_m.\ldots.\exists x_{N}.(k+1+N-m)N\text{-}\mathsf{W}\text{-}
			\mathsf{D}\text{-}\mathsf{CMP}(c_1,\ldots,c_k, 
			x_{m},
			\ldots,x_{N}))$.  
		\item a simple equivalence-preferred extension iff$^*$
			$\mathcal{M}_{\sim}, \evalit \models k\text{-}\mathsf{E}\text{-}\mathsf{CMP}(c_1,\ldots,c_k) \\
			\wedge \bigwedge_{k+1 \leq m \leq N}\neg 
			(\exists x_m.\ldots.\exists x_{N}.(k+1+N-m)\text{-}\mathsf{E}\text{-}\mathsf{CMP}(c_1,\ldots,c_k, 
			x_{m},
			\ldots,x_{N}))$. 
			\\
		\item a simple defence-grounded extension iff$^*$
			$\mathcal{M}_{\sim}, \evalit \models 
			k\text{-}\mathsf{D}\text{-}\mathsf{CMP}(c_1,\ldots,c_k) \\ 
			\wedge 
			\bigwedge_{m \leq k-1}
			\neg (\exists x_1.\ldots.\exists x_{m}.((\bigwedge_{m' \leq m}(\bigvee_{n \leq k} c_n = x_{m'})) \wedge m\text{-}\mathsf{D}\text{-}\mathsf{CMP}(x_1, 
			\ldots,x_{m})))$.
		\item a wide defence-grounded extension iff$^*$
			 $\mathcal{M}_{\sim}, \evalit \models 
			kN\text{-}\mathsf{W}\text{-}\mathsf{D}\text{-}\mathsf{CMP}(c_1,\ldots,c_k) \\ 
			\wedge 
			\bigwedge_{m \leq k-1}
			\neg (\exists x_1.\ldots.\exists x_{m}.((\bigwedge_{m' \leq m}(\bigvee_{n \leq k} c_n = x_{m'})) \wedge mN\text{-}\mathsf{W}\text{-}\mathsf{D}\text{-}\mathsf{CMP}(x_1, 
			\ldots,x_{m})))$. 
		\item a simple equivalence-grounded extension iff$^*$
			$\mathcal{M}_{\sim}, \evalit \models 
			k\text{-}\mathsf{E}\text{-}\mathsf{CMP}(c_1,\ldots,c_k) \\ 
			\wedge 
			\bigwedge_{m \leq k-1}
			\neg (\exists x_1.\ldots.\exists x_{m}.((\bigwedge_{m' \leq m}(\bigvee_{n \leq k} c_n = x_{m'})) \wedge m\text{-}\mathsf{E}\text{-}\mathsf{CMP}(x_1, 
			\ldots,x_{m})))$. 
			\\
		\item a simple defence-stable extension iff$^*$
			$\mathcal{M}_{\sim}, \evalit \models  k\text{-}\mathsf{D}\text{-}\mathsf{CMP}(c_1,\ldots,c_k) \\
			\wedge \forall z.(p_{\mathsf{D}}(z) \wedge (\bigwedge_{1 \leq j\leq k} 
	      \neg z = t_j) \supset \exists x.((\bigvee_{l\leq k}
		      x = t_l) \wedge p_{\mathsf{A}}(x, z)))$. 
		\item a wide defence-stable extension iff$^*$
			$\mathcal{M}_{\sim}, \evalit \models kN\text{-}\mathsf{W}\text{-}\mathsf{D}\text{-}\mathsf{CMP}(c_1, \ldots, c_k) \\
			\wedge \forall z.(p_{\mathsf{D}}(z) \wedge (\bigwedge_{1 \leq j\leq k} 
	      \neg z = t_j) \supset \exists x.((\bigvee_{l\leq k}
		      x = t_l) \wedge p_{\mathsf{A}}(x, z)))$. 

		\item a simple equivalence-stable extension iff$^*$
			$\mathcal{M}_{\sim}, \evalit \models 
			k\text{-}\mathsf{E}\text{-}\mathsf{CMP}(c_1, \ldots, c_k)\\
			\wedge \forall z.(p_{\mathsf{D}}(z) \wedge (\bigwedge_{1 \leq j\leq k} 
	      \neg z = t_j) \supset \exists x.((\bigvee_{l\leq k}
		      x = t_l) \wedge p_{\mathsf{A}}(x, z)))$. 
			}
			\end{enumerate} 
\end{theorem}   
\begin{proof}         
	Let $\overrightarrow{c}$ denote
	$c_1, \ldots, c_k$.  

	Proposition \ref{prop_sigma_tau_complete_extensions} discharge the obligations 1 through 3. \\

	To discharge the obligations 4, for \textbf{Only if} direction, 
	we must prove that the absence of a strictly larger simple defence-complete set than 
	$\{\evalit(c_1), \ldots, \evalit(c_k)\}$ is characterised by 
	 $\bigwedge_{k+1 \leq m \leq N}\neg 
			(\exists x_m.\ldots.\exists x_{N}.(k+1+N-m)\text{-}\mathsf{D}\text{-}\mathsf{CMP}(c_1,\ldots,c_k,\linebreak
			x_{m},
			\ldots,x_{N}))$. Let $N$ be $k$, then it is $\top$. Let $N$ be $k < N$. 
			We must prove that each number in $\{(k+1), \ldots, N\}$ is obtained 
			from $(k+1+N-m)$ where $m$ is such that $k+1 \leq m \leq N$. 
			When $m = N$, $(k+1)$ is obtained. When $m = (N-1)$, $(k+2)$ is obtained, 
	 		and so on. When $m = k + 1$, $N$ is obtained, as required.   

	   \textbf{If}: We omit the `iff$^*$' part. By assumption, 
	$\mathcal{M}_{\sim}, \evalit \models k\text{-}\mathsf{D}\text{-}\mathsf{CMP}(\overrightarrow{c})$. 
	   So, by Proposition \ref{prop_sigma_tau_complete_extensions}, 
	    $\{\evalit(c_1), \ldots, \evalit(c_k)\}$ is a simple defence-complete extension.  
	    We denote this set by $ObjStmts'$. 
	    If $k = N$, there cannot be any strictly larger simple defence-complete extension, 
	    so $ObjStmts'$ is a simple defence-preferred extension. If $k < N$, 
	    by assumption, $\mathcal{M}_{\sim}, \evalit \models \bigwedge_{k+1 \leq m \leq N}\neg 
			(\exists x_m.\ldots\linebreak.\exists x_{N}.(k+1+N-m)\text{-}\mathsf{D}\text{-}\mathsf{CMP}(c_1,\ldots,c_k,
			x_{m},
			\ldots,x_{N}))$. So, $ObjStmts'$ is a maximal simple defence-complete extension, {\it i.e.} 
			a simple defence-preferred extension, as required. 

	   Obligations 5 and 6 are discharged similarly. \\

	   To discharge the obligation 7, for \textbf{Only if} direction, we must prove that 
	   the absence of a strictly smaller simple defence-complete set than $\{\evalit(c_1), \ldots,\linebreak \evalit(c_k)\}$
	   is characterised by $\bigwedge_{m \leq k-1}
			\neg (\exists x_1.\ldots.\exists x_{m}.((\bigwedge_{m' \leq m}(\bigvee_{n \leq k} c_n = x_{m'})) \wedge m\text{-}\mathsf{D}\text{-}\mathsf{CMP}(x_1,
			\ldots,x_{m})))$. Let $k$ be 0, then it is $\top$. 
			Let $k$ be $0 < k$. Then, $\bigwedge_{m \leq k-1}$ ensures to cover 
			every number smaller than $k$ and greater than 0. For any such $m$ with $1 \leq m \leq k-1$,  
			$\bigwedge_{m' \leq m}$ ensures to cover every number greater than 0 up to $m$. 
			Hence, for any $\evalit'$ which is almost exactly $\evalit$ except that, 
			for any $1 \leq m' \leq m$, $\evalit'(x_{m'})$ may be different from $\evalit(x_{m'})$, 
			$\mathcal{M}_{\sim}, \evalit' \models  
			\bigwedge_{m' \leq m}(\bigvee_{n \leq k} c_n = x_{m'})$ holds just when 
			each of 
			$\evalit'(x_1), \ldots, \evalit'(x_{m})$ is a member of 
			$\{\evalit'(c_1), \ldots, \evalit'(c_k)\}$, as required.  

			\textbf{If}: We omit the `iff$^*$' part. By assumption, 
			$\mathcal{M}_{\sim}, \evalit \models k\text{-}\mathsf{D}\text{-}\mathsf{CMP}(\overrightarrow{c})$. 
	   So, by Proposition \ref{prop_sigma_tau_complete_extensions}, 
	    $\{\evalit(c_1), \ldots, \evalit(c_k)\}$ is a simple defence-complete extension.  
	    We denote this set by $ObjStmts'$. 
	    If $0 = k$, there cannot be any strictly smaller simple defence-complete extension, 
	    so $ObjStmts'$ is a simple defence-grounded extension. If $0 < k$, 
	    by assumption, $\mathcal{M}_{\sim}, \evalit \models 
	    \bigwedge_{m \leq k-1}
			\neg (\exists x_1.\ldots\linebreak.\exists x_{m}.((\bigwedge_{m' \leq m}(\bigvee_{n \leq k} c_n = x_{m'})) \wedge m\text{-}\mathsf{D}\text{-}\mathsf{CMP}(x_1, 
			\ldots,x_{m})))$. 
			So, $ObjStmts'$ is a minimal simple defence-complete extension, {\it i.e.} 
			a simple defence-grounded extension, as required. 

	   Obligations 8 and 9 are discharged similarly. \\

	We discharge the obligation 10.

	\textbf{Only if}: By assumption, $\{\evalit(c_1), \ldots, \evalit(c_k)\}$ is a simple defence-complete extension.  
	We denote this set by $ObjStmts'$. 
	By Proposition \ref{prop_sigma_tau_complete_extensions},
	$\mathcal{M}_{\sim}, \evalit \models k\text{-}\mathsf{D}\text{-}\mathsf{CMP}(\overrightarrow{c})$. 

	We prove that $\mathcal{M}_{\sim}, \evalit \models  
	\forall z.(p_{\mathsf{D}}(z) \wedge (\bigwedge_{1 \leq j\leq k} 
	      \neg z = c_j) \supset \exists x.((\bigvee_{l\leq k}
		      x = c_l) \wedge p_{\mathsf{A}}(x, z)))$.  
	Let $\evalit'$ be  almost exactly $\evalit$ except that $\evalit'(z)$ may not be $\evalit(z)$.   
	 \begin{description} 
		 \item[\textbf{Case $\evalit'(z) \not\in ObjStmts$}:] $\mathcal{M}_{\sim}, \evalit' \not\models  
			p_{\mathsf{D}}(z) \wedge (\bigwedge_{1 \leq j\leq k} 
	      \neg z = c_j)$.  
      \item[\textbf{Case $\evalit'(z) \in ObjStmts'$}:]  $\mathcal{M}_{\sim}, \evalit' \not\models  
			p_{\mathsf{D}}(z) \wedge (\bigwedge_{1 \leq j\leq k} 
	      \neg z = c_j)$.  
		 \item[\textbf{Case, otherwise}:]   
			 $\mathcal{M}_{\sim}, \evalit' \models p_{\mathsf{D}}(z) \wedge (\bigwedge_{1 \leq j\leq k} 
	      \neg z = c_j)$. We show 
			 $\mathcal{M}_{\sim}, \evalit' \models \exists x.((\bigvee_{l\leq k}
		      x = c_l) \wedge p_{\mathsf{A}}(x, z))$. 
			 By assumption, there is some $u \in ObjStmts'$ such that $(u, \evalit'(z)) \in ObjE$.  
			 Let $\evalit''$ be almost exactly $\evalit'$ except that $\evalit''(x)$ is the node $u$.  
			 Then, $\mathcal{M}_{\sim}, \evalit'' \models 
			 (\bigvee_{l\leq k} x = c_l) \wedge p_{\mathsf{A}}(x, z)$. 
	 \end{description} 
	 Consequently, $\mathcal{M}_{\sim}, \evalit \models 
	\forall z.(p_{\mathsf{D}}(z) \wedge (\bigwedge_{1 \leq j\leq k} 
	      \neg z = c_j) \supset \exists x.((\bigvee_{l\leq k}
		      x = c_l) \wedge p_{\mathsf{A}}(x, z)))$, as required.  

	 \textbf{If}: We omit the `iff$^*$' part. 
	 By assumption, $\mathcal{M}_{\sim}, \evalit \models k\text{-}\mathsf{D}\text{-}\mathsf{CMP}(\overrightarrow{c})$. 
	 So, by Proposition 
	 \ref{prop_sigma_tau_complete_extensions}, $\{\evalit(c_1), \ldots, \evalit(c_k)\}$ is a simple defence-complete 
	 extension. We denote this set by $ObjStmts'$. 
	 By assumption, $\mathcal{M}_{\sim}, \evalit \models 
	\forall z.(p_{\mathsf{D}}(z) \wedge (\bigwedge_{1 \leq j\leq k} 
	      \neg z = c_j) \supset \exists x.((\bigvee_{l\leq k}
		      x = c_l) \wedge p_{\mathsf{A}}(x, z)))$. 

		      So, for any $u \in ObjStmts \backslash ObjStmts'$, 
		      there is some $u' \in ObjStmts'$ such that $(u', u) \in ObjE$, as required. \\ 

	 Obligations 11 and 12 are discharged similarly. 
		 \hfill$\Box$ 
\end{proof} 
Note this characterisation involves no infinite components. 
This means every \textsf{FOL} formula in Theorem 
\ref{prop_fol_characterisability_dung_acceptability_semantics} 
has a corresponding propositional logic formula. %---we 
\begin{theorem}[Propositional characterisability of $\mathcal{M}_{\sim}$'s extensions]{\ }\\ 
   $\mathcal{M}_{\sim}$'s extensions are characterisable in propositional logic.    
\end{theorem} 
\begin{proof} 
    The domain of discourse is finite, and there are no infinite 
	components in $\mathcal{M}_{\sim}$. The convertibility is then 
	immediate: we expand each quantifier 
	(with $\wedge$s for $\forall$ and with $\vee$s for $\exists$) until no variables 
	remain, and replace each distinct predicate (with now no variable) with a 
	distinct propositional variable.  
	\hfill$\Box$ 
\end{proof} 
$\mathcal{M}_{\mathsf{dung}}$'s extensions are subsumed in $\mathcal{M}_{\sim}$'s 
extensions. Hence, the propositional characterisability of 
preferred and grounded extensions---left open in \cite{Doutre14}---is immediately answered. 
\begin{corollary}[Propositional characterisability of $\mathcal{M}_{\mathsf{dung}}$'s extensions]{\ }\\ 
	$\mathcal{M}_{\mathsf{dung}}$'s extensions are characterisable in propositional logic.     
\end{corollary} 

%\subsection{Encoding Acceptability Semantics}  
In Dung's argumentation theory, an {\it acceptability semantics} of some type is defined as the set of 
all extensions of the type. For example, the {\it complete semantics} is the set of all complete extensions in $\mathcal{M}_{\textsf{dung}}$. We prove that acceptability semantics in $\mathcal{M}_{\sim}$ of type `wide defence-complete' is characterisable 
in our discussion graph semantics. Characterisability of acceptability semantics of any other types is proved 
identically. Then, propositional characterisability of $\mathcal{M}_{\textsf{dung}}$'s acceptability 
semantics is 
an immediate corollary.  

We make use of formulas $k_1k_2\text{-}\mathsf{DISTINCT}(t_1, \ldots, t_{k_1}, t_{k_1+1}, t_{k_1+k_2})$ 
and $k_1\cdots\linebreak k_{m}N\text{-}\mathsf{W}\text{-}\mathsf{D}\text{-}\mathsf{CMPS}(t_1, \ldots, t_{k_1}, 
t_{k_1+1}, \ldots, t_{k_1+k_2}, \ldots, t_{(\Sigma_{i < m} k_i) + 1}, \ldots, t_{(\Sigma_{i < m} k_i) + k_m})$. 

\begin{definition}[$k_1k_2\text{-}\mathsf{DISTINCT}$] \label{def_distinct}  \rm 
	$k_1k_2\text{-}\mathsf{DISTINCT}(t_1, \ldots, t_{k_1}, t_{k_1+1}, \ldots, t_{k_1+k_2})$ is: 
	
	\begin{itemize} 
		\item $\bot$ if $0 = k_1 = k_2$. 
		\item $p_{\mathsf{D}}(t_1, \ldots, t_{k_2})$ if $k_1 = 0 \not= k_2$. 
		\item $p_{\mathsf{D}}(t_1, \ldots, t_{k_1})$ if $k_2 = 0 \not= k_1$. 
		\item $p_{\mathsf{D}}(t_1, \ldots, t_{k_1}) \wedge 
			p_{\mathsf{D}}(t_{k_1+1}, \ldots, t_{k_1+k_2}) 
			\wedge \exists w_1.(p_{\mathsf{D}}(w_1) \wedge (((\bigvee_{1 \leq k'_1 \leq k_1} w_1 = t_{k'_1}) 
			\wedge \forall w_2.(p_{\mathsf{D}}(w_2) \wedge (\bigvee_{k_1+1 \leqq k'_2 \leq k_1 + k_2} w_2 = t_{k'_2})
			\supset w_1 \not= w_2)) \vee ((\bigvee_{k_1 + 1 \leq k'_2 \leq k_1 + k_2} w_1 = t_{k'_2}) 
			\wedge \forall w_2.(p_{\mathsf{D}}(w_2) \wedge (\bigvee_{1 \leqq k'_1 \leq k_1} w_2 = t_{k'_1})
			\supset w_1 \not= w_2))))$, otherwise. \hfill$\spadesuit$ 
	\end{itemize}
\end{definition} 
$k_1k_2\text{-}\mathsf{DISTINCT}$ is for whether $k_1$ nodes are distinct, $k_2$ nodes are distinct, 
and they are distinct. 

\begin{definition}[$k_1\cdots k_m N\text{-}\mathsf{W}\text{-}\mathsf{D}\text{-}\mathsf{CMPS}$] \label{def_cmps} 
      $k_1\cdots k_{m}N\text{-}\mathsf{W}\text{-}\mathsf{D}\text{-}\mathsf{CMPS}(t_1, \ldots, t_{k_1},\linebreak
t_{k_1+1}, \ldots, t_{k_1+k_2}, \ldots, t_{(\Sigma_{i < m} k_i) + 1}, \ldots, t_{(\Sigma_{i < m} k_i) + k_m})$ is:  
	\begin{itemize} 
		\item $(\bigwedge_{1 \leq j_1 < m} (\bigwedge_{j_1 < j_2 \leq m} k_{j_1}k_{j_2}\mathsf{DISTINCT}(t_{(\Sigma_{i < 
			j_1} k_i) + 1}, \ldots, t_{(\Sigma_{i < j_1} k_i) + k_{j_1}},\linebreak 
			t_{(\Sigma_{i < j_2} k_k) + 1}, \ldots, t_{(\Sigma_{i < j_2} k_i) + k_{j_2}})))  \\ 
			\wedge\\ 
			(\bigwedge_{1 \leq j \leq m} k_jN\text{-}\mathsf{W}\text{-}\mathsf{D}
			\text{-}\mathsf{CMP}(t_{(\Sigma_{i < j} k_i) + 1}, \ldots, t_{(\Sigma_{i < j} k_i) + k_j}))\\
			\wedge\\
			(\bigwedge_{0 \leq j \leq N} \forall v_1.\ldots \forall 
			v_j.(jN\text{-}\mathsf{W}\text{-}\mathsf{D}\text{-}\mathsf{CMP}(v_1, \ldots, v_j) \supset\\ \quad \bigvee_{1 \leq m' \leq m} \neg jk_{m'}\text{-}\mathsf{DISTINCT}(v_1, \ldots, 
			v_j, t_{(\Sigma_{i < m'} k_i ) + 1}, \ldots, t_{(\Sigma_{i < m'} k_i ) + k_{m'}})))$. 
	\end{itemize} 
\end{definition} 
$k_1\cdots k_m N\text{-}\mathsf{W}\text{-}\mathsf{D}\text{-}\mathsf{CMPS}$ is for whether 
$m$ distinct sets of nodes are all distinct to each other and comprise all wide defence-complete extensions.

\begin{theorem}[Characterisability of $\mathcal{M}_{\sim}$'s wide defence-complete acceptability semantics]\label{thm_msim_wide_defence_complete_acceptability_semantics} 
     Given {\small $\mathcal{M}_{\sim} \equiv 
	((ObjStmts,ObjE), Obj\Pi)$}, 
 	let $\evalit$ be such that: $\evalit(p_{\mathsf{A}})$ is \resizebox{3cm}{!}{$[\star 1:\{\} \xrightarrow{\{attacks\}} \star 2:\{\}]$} if $p_{\mathsf{A}}$'s arity is 2 and   
\resizebox{3cm}{!}{$
[\!
\begin{tikzpicture}[baseline=(n.base)]
  \node (n) {$\star 1:\{\}$};
  \draw[->, looseness=8, out=10, in=-10]
    (n) to node[right, xshift=2pt] {\scriptsize$\{attacks\}$} (n);
\end{tikzpicture}
\!]
$}
	if it is 1; $\evalit(p_{\mathsf{D}})$ 
	is 
	\resizebox{3cm}{!}{$[\star 1:\{\} \ \star 2:\{\} \ \cdots \  \star n:\{\}] $} 
	with $n$ being the arity of $p_{\mathsf{D}}$;  and 
	$\evalit(p_{\mathsf{AnnoEq}})$ is 
	\resizebox{3cm}{!}{$[\star 1:\{\star 3\} \ \star 2:\{\star 3\}]$}.  
	Let $N$ be $|ObjStmts|$ and, for each $1 \leq i \leq m$, $k_i$ be $0 \leq k_i \leq N$.

     Then, $\{\{\evalit(c_1), \ldots, \evalit(c_{k_1})\}, \ldots, 
	\{\evalit(c_{(\Sigma_{j < m} k_j) + 1}), \ldots, \evalit(c_{(\Sigma_{j < m} k_j) + k_m})\}\}$ 
	is the wide defence-complete acceptability semantics iff$^*$ 
	$\mathcal{M}_{\sim}, \evalit \models 
	k_1\cdots k_{m}N\text{-}\mathsf{W}\text{-}\mathsf{D}\text{-}\\\mathsf{CMPS}(c_1, \ldots, c_{k_1},
 \ldots, c_{(\Sigma_{i < m} k_i) + 1}, \ldots, c_{(\Sigma_{i < m} k_i) + k_m})$. 
\end{theorem} 
\begin{proof} 
   Straightforward. \hfill$\Box$ 
\end{proof} 

For any other type of acceptability semantics, it is just the matter of replacing the formula 
characterising a wide defence-complete extension to another formula characterising 
an extension of that type. 

\begin{theorem}[\textsf{FOL} characterisability of $\mathcal{M}_{\sim}$'s acceptability semantics] 
	$\mathcal{M}_{\sim}$'s acceptability semantics are characterisable in \textsf{FOL} with discussion graph semantics. 
\end{theorem} 
\begin{proof} 
   Straightforward. \hfill$\Box$ 
\end{proof} 

\begin{theorem}[Propositional characterisability of $\mathcal{M}_{\sim}$'s acceptability semantics] 
    $\mathcal{M}_{\sim}$'s acceptability semantics are characterisable in propositional logic. 
\end{theorem} 
\begin{proof} 
   Straightforward. \hfill$\Box$ 
\end{proof} 
\begin{corollary}[Propositional characterisability of $\mathcal{M}_{\textsf{dung}}$'s acceptability semantics] 
	$\mathcal{M}_{\textsf{dung}}$'s acceptability semantics are characterisable in propositional logic. 
\end{corollary} 

\section{Conclusions and Related Work}

This paper has formulated the \emph{discussion-graph semantics} of first-order logic (\textsf{FOL}), presented 
$M_{\sim}$ as an \emph{equivalence-equipped Dung model}, defined its extensions, and established their \emph{first-order characterisability}. As an immediate consequence, we have shown that all of Dung’s extensions are characterisable in propositional logic, thereby closing 
the question regarding the propositional characterisability of 
preferred and grounded extensions, left open in \cite{Doutre14}. Furthermore, we have shown 
that all of $\mathcal{M}_{\sim}$'s and $\mathcal{M}_{\textsf{dung}}$'s acceptability semantics 
are both first-order and propositional characterisable. Propositional logic is hence expressive enough 
to encode Dung's argumentation entirely. 

Our formulation of discussion-graph semantics is \emph{top-down}, encompassing any object-level annotated graph. Moreover, it is a \emph{predicate logic}, allowing 
reasoning over object-level annotated graphs with variables and quantifiers. 
Since numerous solvers and tools already exist for \textsf{FOL}, remaining within its formal realm provides substantial practical benefits from established theoretical and computational techniques.

As for future work, this framework provides a promising foundation for \emph{interdisciplinary research} bridging formal discussion and argumentation with \emph{program analysis and verification}. Having established a specification language for reasoning about discussion graphs in general, the next step is to explore the integration of existing formal methods from program analysis and verification into reasoning about discussions.

\subsection*{Related Work}

Several proposals have been made for reasoning about argumentation within formal 
logic \cite{Villata12,Doutre14,Wooldridge05}. However, these approaches are 
generally bottom-up, tailored to Dung’s 
model~\cite{Villata12,Doutre14}, and often lack variables and quantifiers \cite{Doutre14}. 
Some introduce new logical connectives for {\it attack} and related notions \cite{Villata12}, 
or encode meta-properties of object-level graphs as propositional variables \cite{Doutre14,Wooldridge05}. As such, they are not readily extensible to reasoning about other discussion models.  
The embedding of meta-properties complicates the correspondence between syntax and semantics, 
while the absence of quantification leads to lengthy formal descriptions.  
This paper addresses these issues. 

The idea of a \emph{typed discussion graph} as a generalisation of 
formal argumentation models appears in \cite{Arisaka22}, where it serves 
as a framework for formalising fallacies. 
Here, we instead use it as the domain of discourse in \textsf{FOL}.

The consideration of \emph{equivalence-equipped Dung models} has also appeared 
earlier, notably in \emph{Block Argumentation} \cite{ArisakaSantini19,Arisaka24}, 
which defines equality among arguments. 
However, as far as we are aware, the generalisation of Dung’s extensions through 
the distinctions between simple and wide conflict-freeness and defence has not 
been undertaken before.

\subsection*{Acknowledgements} 
This work was supported by JSPS KAKENHI Grant Numbers 21K12028 and  
25K15245. We thank anonymous reviewers for helpful comments. 

\bibliographystyle{abbrv}
\bibliography{references}

\end{document}